\documentclass[letterpaper]{article} % DO NOT CHANGE THIS
\usepackage[]{aaai2026}  % DO NOT CHANGE THIS
\usepackage{times}  % DO NOT CHANGE THIS
\usepackage{helvet}  % DO NOT CHANGE THIS
\usepackage{courier}  % DO NOT CHANGE THIS
\usepackage[hyphens]{url}  % DO NOT CHANGE THIS
\usepackage{graphicx} % DO NOT CHANGE THIS
\usepackage{natbib}  % DO NOT CHANGE THIS AND DO NOT ADD ANY OPTIONS TO IT
\usepackage{caption} % DO NOT CHANGE THIS AND DO NOT ADD ANY OPTIONS TO IT
\usepackage{amsmath,amssymb,amsfonts}
\usepackage{booktabs}
\usepackage{algorithm}
\usepackage{algorithmic}

\newtheorem{theorem}{Theorem}
\newtheorem{proposition}{Proposition}

\title{Physics Closure Matters for Machine Olfaction: A Maxwell--Stefan Graph Solver for Identifiable Dynamic Gas Unmixing}
\author{
    Yue Shi\textsuperscript{\rm 1},
    Liangxiu Han\textsuperscript{\rm 1,*},
    Xin Zhang\textsuperscript{\rm 1},
    Tam Sobeih\textsuperscript{\rm 1}
}

\affiliations{
    \textsuperscript{\rm 1}Department of Computing and Mathematics, Faculty of Science and Engineering, Manchester Metropolitan University, Manchester M1 5GD, U.K.\\
    \textsuperscript{*}Corresponding author: l.han@mmu.ac.uk.\\
}

\begin{document}
\nocopyright
\maketitle

\begin{abstract}

Machine olfaction for gas unmixing is an underconstrained inverse problem in which gas compositions must be inferred from low-dimensional, delayed, and entangled sensor responses produced by interacting chemical transport, surface adsorption, and sensor transduction. 
One of the key obstacles is physics closure misspecification, where a neural network is designed to fit sensor traces rather than infer a physically closed olfactory process.
In this work, we formulate gas unmixing as a multi-physics-constrained inverse problem governed by Maxwell--Stefan multicomponent transport PDEs, competitive adsorption ODEs, and nonlinear sensor transduction ODEs. 
Directly solving such a high-dimensional coupled system is computationally expensive and often numerically unstable. 
To this end, we propose UnMixNet, a physics-closed graph neural solver that embeds this multi-physics forward process into end-to-end gas unmixing.
UnMixNet discretizes Maxwell--Stefan cross-diffusion on spatial graphs and formulates the multicomponent flux on each edge. 
This design enables local, differentiable, and flux-conservative inference for multicomponent cross-diffusion. 
Evaluations on SmellNet show improved single-odor recognition, seen-mixture unmixing, and unseen-mixture generalization. In addition, an external validation on UCI Dynamic Gas Mixtures shows that the inferred concentration process agrees with ground truth concentration set points under dynamic transitions. 
Process-consistency diagnostics further show that the proposed model learns transferable dynamic physical fingerprints that better satisfy transport, conservation, adsorption, and readout closure.
\end{abstract}

\section{Introduction}
Machine olfaction is becoming a frontier problem for artificial intelligence (AI), which aims to infer the composition of an odor scene from sensor responses. 
Unlike computer vision signals, olfactory sensor signals are the endpoint of an interacting physical process of odorants co-propagating, adsorbing, and producing nonlinear and entangled electrical responses. 
In realistic sensing environments, odor components do not propagate independently.
Transport is governed by multicomponent cross-diffusion, surface interaction is affected by competitive adsorption, and the final electrical readout is produced through nonlinear sensor transduction \citep{nikolic2020semiconductor,mei2024crosssensitivity}. 
When several gases co-propagate inside a sensing chamber, the response of one sensor channel may be shaped not only by the concentration of a target compound, but also by flux coupling and adsorption competition induced by other compounds \citep{lee2023principal,sung2024datacentric}. 
This makes gas unmixing a highly underconstrained inverse problem, where the goal is not merely to classify sensor patterns, but to recover physically meaningful gas compositions from dynamic and entangled observations.

\begin{figure*}[t]
    \centering
    \includegraphics[width=0.95\textwidth]{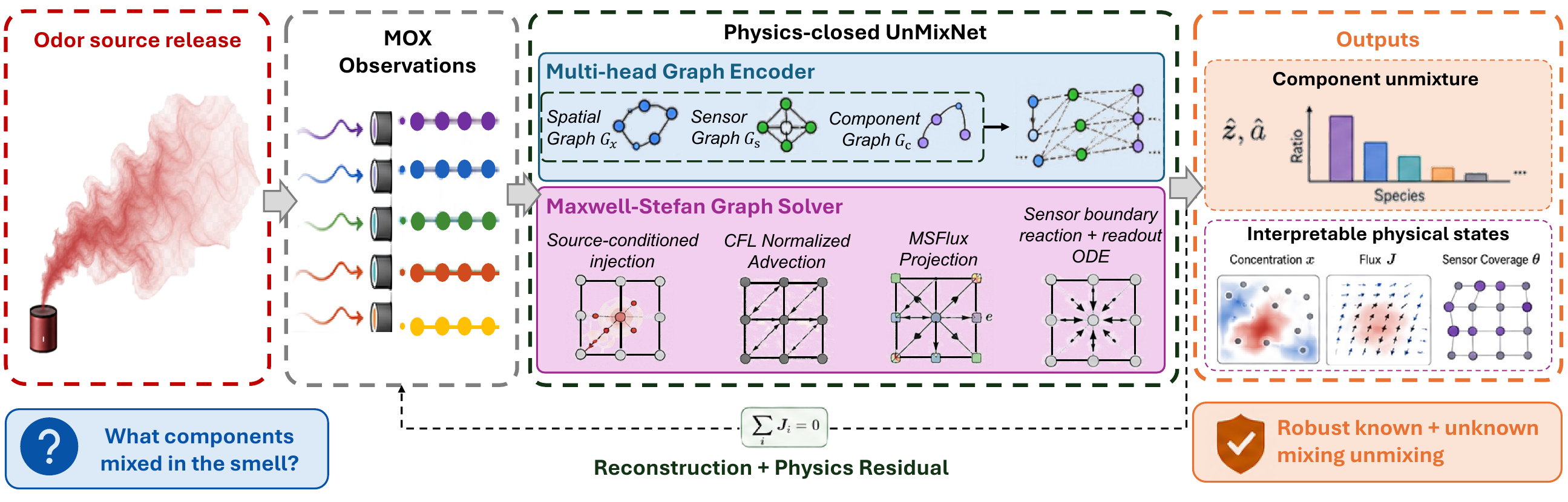}
    \caption{
    \textbf{Overview of physics-closed UnMixNet for gas unmixing.}
    Given a released odor source, a metal-oxide sensor array records dynamic responses to gas components.
    UnMixNet formulates olfactory perception as a multi-physics inverse problem. 
    A multi-head graph encoder builds spatial, sensor, and component graphs, while the Maxwell--Stefan graph solver performs source-conditioned injection, cross-diffusive graph advection, conservative multicomponent flux projection, and sensor-boundary reaction-readout dynamics.
    }
    \label{fig:overview}
\end{figure*}

A central difficulty is that direct temporal prediction is not the same as olfactory scene explanation. 
Learning-based olfactory models commonly treat sensor responses as generic time-series data and optimize a direct map from traces to labels \citep{feng2026smellnet}. 
Although such models can be effective on seen odor classes, they rarely require the predicted mixture to be explainable by mass-conservative transport, Maxwell--Stefan cross-diffusion, competitive adsorption, and nonlinear transduction dynamics. 
We refer to this failure mode as \emph{physics closure misspecification}: a model fits sensor traces while leaving the forward olfactory process that could have generated them unconstrained. 
This failure mode is especially harmful for unseen mixtures, where several component supports can induce similar delayed MOX responses unless their dynamic fingerprints are separated by a closed forward mechanism.

It is important to distinguish physics closure from standard physics-informed regularization. 
In residual-based physics-informed learning, physical equations usually enter as auxiliary losses applied to an otherwise flexible predictor. 
In contrast, UnMixNet uses physical operators as the state-transition mechanism itself. 
An inverse prediction is therefore not only a mixture-ratio vector; it must be liftable into a forward trajectory that satisfies conservative transport, Maxwell--Stefan flux admissibility, competitive adsorption, and monotone readout dynamics. 
Physics-based solvers provide a mechanistic description of gas transport and sensor dynamics, but directly solving coupled Maxwell--Stefan PDEs, competitive adsorption ODEs, and nonlinear transduction ODEs for every test sample is computationally expensive and often numerically unstable \citep{eliasof2024feature}.

In this work, we address this gap by proposing physics-closed UnMixNet, a graph-based differentiable solver for machine olfaction.
UnMixNet discretizes the physically closed transport--adsorption--readout process into a coupled spatial--component--sensor graph. Specifically, an inverse encoder infers latent variables and physical coefficients for the graph construction, while a forward graph solver reconstructs the sensor trajectory through multi-physics closure of conservative transport, Maxwell--Stefan flux projection, sensor-boundary adsorption, and nonlinear readout. 
The key methodological shift is that graph messages are not arbitrary hidden features; they are physically meaningful fluxes, boundary reactions, and readout states. 
This turns olfactory unmixing from temporal pattern regression into local, differentiable, and flux-conservative physical scene solving.

Our contributions are as follows. 
\textbf{(1) Multi-physics graph closure.} We formulate gas unmixing as a multi-physics-constrained inverse problem and introduce UnMixNet, a coupled spatial--component--sensor graph solver that jointly models source release, conservative transport, Maxwell--Stefan cross-diffusion, competitive adsorption, and monotone readout. 
\textbf{(2) Maxwell--Stefan graph flux projection.} We replace unconstrained graph messages with edge-wise multicomponent fluxes by learning symmetric positive diffusivities, constructing component Laplacians, and solving differentiable KKT projections. 
\textbf{(3) Identifiable dynamic fingerprints.} We connect physical closure to sparse mixture identifiability through dynamic fingerprint incoherence and evaluate the mechanism on real SmellNet unmixing, external concentration validation on UCI Dynamic Gas Mixtures, process-consistency diagnostics, ablations, and efficiency analysis.

\section{Related Work}

\textbf{Machine olfaction as chemical scene perception.}
Data-driven olfactory systems aim to standardize sensor response manifolds and learn robust representations \citep{sung2024datacentric}. 
Recent work learns latent odor representations to extract discriminative waveform features for gas classification \citep{sung2024datacentric} and uses deep sequence models to predict gas concentrations \citep{Yang2024AIPortableENose}. 
For example, SmellNet provides portable sensor time series for natural foods and controlled mixtures, pairs ingredients with GC--MS priors, and introduces ScentFormer for temporal smell recognition \citep{feng2026smellnet}.
These works establish machine olfaction as a data-rich and practically important sensing problem. 
However, most models still treat sensor outputs as generic temporal patterns. 
They do not explicitly require a predicted mixture to be generated by coupled gas transport, sensor-surface interaction, and transduction dynamics. 
This limits generalization to unseen mixtures, where the task is not simply to recognize a known odor trace but to explain a new chemical composition from entangled observations.

\textbf{Multi-physics mechanisms in olfactory sensing.}
Multicomponent gas transport is commonly described by Maxwell--Stefan-type diffusion models, often coupled with adsorption equilibrium. 
Recent works have developed efficient Maxwell--Stefan formulations for gas separation with multicomponent convection--diffusion systems \citep{Aznaran2025FEMOSM}.
In sensing scenarios, adsorption and diffusion kinetics have also been exploited to improve volatile-compound selectivity in nanoporous sensors \citep{Matavz2025KineticSelectivity}.
Classical Maxwell--Stefan solvers require solving coupled nonlinear PDEs with specified boundary conditions and transport parameters \citep{Huo2024MSCahnHilliard}.
Such solvers are physically expressive but computationally expensive, numerically sensitive, and difficult to insert into end-to-end inverse inference. 
UnMixNet keeps the mechanistic structure of multicomponent transport and sensor response, but amortizes the inverse problem through a differentiable graph solver.

\textbf{Neural physical solvers and graph dynamics.}
Graph-based neural solvers have recently emerged as a powerful interface between numerical simulation and representation learning. 
GRAND interprets GNN layers as discretizations of diffusion PDEs \citep{chamberlain2021grand}; message-passing neural PDE solvers replace components of numerical solvers with learned message functions \citep{brandstetter2022message}. 
Neural operators such as FNO \citep{li2021fno}, Transolver \citep{wu2024transolver}, UPT \citep{alkin2024uptfixed}, and CoDA-NO \citep{rahman2024coda} learn mappings between functions for PDE families, complex geometries, and multiphysics systems.
At a larger scale, graph neural simulators have also shown strong performance in physical forecasting tasks, such as global weather prediction \citep{Lam2023GraphCast}. 
These works show that graph structures are well suited for irregular domains and non-Euclidean discretizations.
UnMixNet differs in problem direction and message semantics: it addresses olfactory inverse sensing, and its graph messages represent species moles, Maxwell--Stefan fluxes, and sensor-boundary states rather than unconstrained latent features.

\textbf{Physics-informed learning versus physics closure.}
Physics-informed neural models often impose physical laws through residual penalties, while neural operators learn surrogate maps for families of PDE solutions. 
For dynamic gas unmixing, however, the main challenge is not only to regularize a predictor or approximate a forward simulator. 
The inferred mixture itself must be closed by a forward explanation that is admissible under the assumed transport--adsorption--readout mechanism. 
UnMixNet therefore embeds this chain into the state transition: neural networks propose physical coefficients, while graph operators enforce conservative transport, Maxwell--Stefan flux admissibility, adsorption competition, and monotone readout. 
This operator-level closure is the main distinction from generic physics-informed GNNs and from graph dynamics models whose messages remain unconstrained latent features.

\section{Problem Formulation}

Let $Y=\{y_m(t_n)\}_{n=1,m=1}^{T,M}\in\mathbb{R}^{T\times M}$ be a multichannel sensor sequence, and let $e$ denote experimental conditions such as temperature, humidity, flow rate, chamber geometry, source protocol, and sensor layout. 
There are $K$ candidate odorants and an additional carrier-air component indexed by $0$. 
For odorant $i\in\{1,\ldots,K\}$, let $z_i\in\{0,1\}$ indicate presence and let $a_i\geq 0$ denote its mixture proportion, with $\sum_{i=1}^K a_i=1$ over present components. 
We seek an amortized inverse map
\begin{equation}
(\hat z,\hat a,\hat\Psi)=E_\phi(Y,e),
\end{equation}
where $\hat\Psi$ contains physical parameters such as release rates, Maxwell--Stefan coefficients $D_{ij}$, adsorption and desorption rates $k^{\mathrm{ads}}_{m,i},k^{\mathrm{des}}_{m,i}$, transduction gains $\beta_{m,i}$, and sensor time constants $\tau_m$. 
The forward reconstruction is produced by a differentiable solver
\begin{equation}
\hat Y=F^{\mathrm{UnMixNet}}_\theta(\hat z,\hat a,\hat\Psi,e).
\end{equation}
Training closes the inverse-forward loop: inferred mixture variables must not only match labels when they are available, but also reconstruct the observed dynamic response through a physically constrained solver.

\textbf{Closure misspecification in machine olfaction.}
A direct olfactory predictor maps $(Y,e)$ to $(\hat z,\hat a)$ and can be optimized to fit mixture labels or sensor traces:
\begin{equation}
(\hat z,\hat a)=g_\varphi(Y,e).
\end{equation}
Such a predictor is closure-misspecified when its output is not required to be generated by any coupled transport--adsorption--readout process. 
This distinction is crucial for unseen mixtures. 
A model may separate familiar temporal response shapes while still assigning mass to spurious components when several odorants produce similar delayed sensor dynamics. 
In a closure-consistent inverse model, the predicted composition is admissible only if there exists a latent rollout
\begin{equation}
S_{0:T}=\{N^t,x^t,J^t,\theta^t,y^t\}_{t=0}^{T}
\end{equation}
that reconstructs the measured response through the forward solver:
\begin{equation}
S^{t+1}=\mathcal T_\theta(S^t;\hat z,\hat a,\hat\Psi,e),\qquad \hat Y\approx Y.
\end{equation}
UnMixNet implements this requirement by coupling the encoder with a graph-discretized physical solver. 
The prediction is therefore judged not only by label agreement, but also by whether the inferred mixture can explain the observed olfactory trajectory through conservative multicomponent transport, competitive sensor-boundary reaction, and nonlinear readout.

\section{Theoretical Insights}

This section explains why physics closure is useful for gas unmixing. 
We first write the olfactory forward process as a coupled multi-physics inverse problem. 
We then summarize closure guarantees that connect the graph solver to mixture identifiability, finite-volume conservation, Maxwell--Stefan admissibility, and stable sensor readout.

\subsection{Olfactory Unmixing as a Multi-Physics Inverse Problem}

\textbf{Continuous forward model.} The physical chain contains four coupled components. 
A source releases odorant $i$ according to
\begin{equation}
q_i(x,t)=z_i a_i \rho_i(t;\psi_i,T,H)p_{\mathrm{src}}(x),
\end{equation}
where $p_{\mathrm{src}}$ is a source distribution and $\rho_i$ is a positive release profile. 
Let $x_i(x,t)$ be the mole fraction, $c(x,t)$ the total molar concentration, $u(x,t)$ the advective velocity, and $J_i(x,t)$ the diffusive flux. 
Transport follows
\begin{equation}
\partial_t(c x_i)+\nabla\cdot(c x_i u+J_i)=q_i-r_i^{\mathrm{wall}}-r_i^{\mathrm{ads}},
\end{equation}
with constraints $\sum_{i=0}^{K}x_i=1$ and $\sum_{i=0}^{K}J_i=0$. 
Maxwell--Stefan diffusion is given by
\begin{equation}
-\nabla x_i=\sum_{j\neq i}\frac{x_jJ_i-x_iJ_j}{cD_{ij}}, \qquad D_{ij}=D_{ji}>0.
\end{equation}
At sensor $m$, the surface coverage $\theta_{m,i}$ follows competitive adsorption:
\begin{equation}
\dot\theta_{m,i}=k^{\mathrm{ads}}_{m,i}c_i(x_m,t)\theta_{m,0}-k^{\mathrm{des}}_{m,i}\theta_{m,i}, \quad
\theta_{m,0}=1-\sum_{i=1}^K\theta_{m,i}.
\end{equation}
The electrical signal is a nonlinear transduction of coverage with readout lag:
\begin{equation}
\tau_m(T,H)\dot{\hat y}_m=-\hat y_m+b_m(T,H)+g_m\Big(\sum_i\beta_{m,i}\theta_{m,i},T,H\Big).
\end{equation}

The equations above define the physical process that a closure-consistent inverse prediction should explain. 
The source term makes the static mixture vector dynamic; the transport equation couples odorants through Maxwell--Stefan cross-diffusion; the adsorption ODE models sensor surfaces as competitive reactive boundaries; and the readout ODE turns coverage into a delayed electrical response. 
Directly solving this continuous system for every test sample is impractical, but ignoring it creates closure misspecification. 
UnMixNet approximates this chain with local graph operators whose algebraic properties are summarized next.

\subsection{Closure Guarantees}

The proofs follow arguments from sparse recovery, finite-volume conservation, constrained graph Laplacians, and Markov generators.

\textbf{Dynamic fingerprint identifiability.} Define the pure-component dynamic fingerprint
\begin{equation}
H_i=\mathrm{vec}\big(F_\theta^{\mathrm{UnMixNet}}(z_i=1,a_i=1,e)\big)\in\mathbb{R}^{TM},
\end{equation}
and $H=[H_1,\ldots,H_K]$. 
In a local low-concentration regime, $\mathrm{vec}(Y)\approx Ha+\epsilon$. 
Let
\begin{equation}
\mu(H)=\max_{i\neq j}\frac{|H_i^\top H_j|}{\|H_i\|_2\|H_j\|_2}
\end{equation}
be the mutual coherence.

\begin{theorem}[Identifiability]
If the true mixture is $s$-sparse and $\mu(H)<1/(2s-1)$, then the $s$-sparse mixture ratio is uniquely identifiable in the noiseless linearized model.
\end{theorem}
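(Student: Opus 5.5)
The argument is the standard coherence-based uniqueness proof from sparse recovery (Donoho--Elad, Gribonval--Nielsen), applied to the column-normalized fingerprint matrix. First normalize: let $\tilde H_i=H_i/\|H_i\|_2$, which is well defined because a pure-component rollout of the solver with $a_i=1$ produces a nonzero response. Then $\tilde H=HW^{-1}$ with $W=\mathrm{diag}(\|H_i\|_2)$. Since $W$ is invertible and preserves supports, uniqueness of the $s$-sparse $a$ in $\mathrm{vec}(Y)=Ha$ is equivalent to uniqueness of the $s$-sparse $Wa$ in $\mathrm{vec}(Y)=\tilde H(Wa)$. Also, $\mu(H)$ is defined scale-invariantly, so it equals the largest off-diagonal entry in absolute value of the Gram matrix $G=\tilde H^\top\tilde H$, whose diagonal entries are all $1$.

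The key step is to show that every set of at most $2s$ columns of $\tilde H$ is linearly independent, i.e.\ $\mathrm{spark}(H)>2s$. Take any index set $S$ with $|S|=k\le 2s$ and consider the principal submatrix $G_{SS}$. Its diagonal entries are $1$, and each row has off-diagonal absolute sum at most $(k-1)\mu(H)\le(2s-1)\mu(H)<1$. By the Gershgorin circle theorem every eigenvalue of $G_{SS}$ lies in $[1-(k-1)\mu,\,1+(k-1)\mu]$, so they are all strictly positive. Hence $G_{SS}=\tilde H_S^\top\tilde H_S$ is positive definite, and the columns $\tilde H_S$, and therefore $H_S$, are linearly independent.

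Uniqueness then follows. Suppose $a$ and $a'$ are both $s$-sparse and $Ha=Ha'$. Then $H(a-a')=0$, and $a-a'$ is supported on at most $2s$ indices. By the previous step those columns are independent, so $a=a'$. In particular, both the support $\hat z$ and the ratios $\hat a$ are recovered uniquely; the simplex constraint is not even needed here, although it is consistent with the conclusion. I would also remark that the same Gershgorin bound gives a lower bound $1-(2s-1)\mu$ on the restricted eigenvalues, which points toward a stable noisy version.

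The main obstacle is not the algebra but stating precisely what is being claimed. The theorem concerns the noiseless linearized model $\mathrm{vec}(Y)=Ha$ exactly, with $H$ fixed by the solver at the given $e$. I would state explicitly that we do not address the linearization error, that $H$ depends on $\theta$ but not on $a$ in this regime, and that every $H_i\neq0$. Under those assumptions the Gershgorin step carries the whole proof.
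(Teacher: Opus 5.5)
Your proposal is correct and matches the paper's proof sketch essentially step for step: you normalize the columns, use Gershgorin to get $\lambda_{\min}(G_S)\ge 1-(|S|-1)\mu(H)>0$ for every $|S|\le 2s$, and conclude via the nullspace contradiction for the $2s$-sparse difference of two $s$-sparse solutions. Your explicit assumption that every $H_i\neq 0$ is a useful clarification the paper leaves implicit; it is already needed for $\mu(H)$ to be defined at all.
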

\textbf{Proof sketch.} For any support $S$ with $|S|\leq 2s$, normalize the columns and consider $G_S=H_S^\top H_S$. 
The Gershgorin circle theorem gives $\lambda_{\min}(G_S)\geq 1-(|S|-1)\mu(H)>0$; hence every $2s$-column submatrix is full rank. 
If two $s$-sparse vectors map to the same observation, their difference is $2s$-sparse and lies in the nullspace, which yields a contradiction. 

\paragraph{Scope of the identifiability result.}
Theorem~1 should be interpreted as a local identifiability argument under a low-concentration linearized regime, where the sensor trajectory can be approximated by a sparse combination of pure-component dynamic fingerprints. 
It does not establish global identifiability for the full nonlinear transport--adsorption--readout system. 
Its role is to explain why a physics-closed decoder should produce incoherent dynamic fingerprints: lower coherence reduces support ambiguity in sparse mixture recovery. 
We therefore include an incoherence penalty during training.

\textbf{Conservative graph advection.} Let $N^n_{v,i}$ be the moles of species $i$ at spatial cell $v$. 
A graph advection layer uses transfer probabilities $P^n_{u\rightarrow v,i}\geq0$ with $\sum_vP^n_{u\rightarrow v,i}\leq1$:
\begin{equation}
N^{A}_{v,i}=\Big(1-\sum_w P^n_{v\rightarrow w,i}\Big)N^0_{v,i}+\sum_u P^n_{u\rightarrow v,i}N^0_{u,i}.
\end{equation}

\begin{proposition}
For a closed graph without sources or sinks, graph advection preserves positivity and species-wise mass: $N^A_{v,i}\geq0$ and $\sum_vN^A_{v,i}=\sum_vN^0_{v,i}$.
\end{proposition}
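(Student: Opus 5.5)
The argument treats each species $i$ separately, since the update never mixes species. Positivity and mass conservation are handled in turn. Throughout, ``closed graph without sources or sinks'' is read as meaning two things: the transfer probabilities $P^n_{u\rightarrow v,i}$ are defined only between vertices of the graph, and no mass leaves through boundary or wall edges. I also assume the initial state is nonnegative, $N^0_{v,i}\geq 0$. That is the physically meaningful setting and is implicitly needed, since the claim would fail for negative initial moles.

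\textbf{Positivity.} The constraint $\sum_w P^n_{v\rightarrow w,i}\leq 1$ makes the retention factor $1-\sum_w P^n_{v\rightarrow w,i}$ nonnegative. So $N^A_{v,i}$ is a sum of two kinds of terms:
\begin{itemize}
\item the retention factor times $N^0_{v,i}$;
\item the inflow terms $P^n_{u\rightarrow v,i}N^0_{u,i}$.
\end{itemize}
Each is a product of nonnegative quantities, so $N^A_{v,i}\geq 0$. Equivalently, the update is $N^A_{\cdot,i}=(P^{(i)})^\top N^0_{\cdot,i}$ for a matrix $P^{(i)}$ with off-diagonal entries $P^n_{u\rightarrow v,i}$ and diagonal entries $1-\sum_w P^n_{v\rightarrow w,i}$. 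This matrix is entrywise nonnegative with unit row sums, i.e.\ row-stochastic. That is the Markov-generator viewpoint the paper alludes to.

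\textbf{Mass conservation.} I sum the update over $v$ and split it into three parts:
\[
\sum_v N^A_{v,i}=\sum_v N^0_{v,i}-\sum_v\sum_w P^n_{v\rightarrow w,i}N^0_{v,i}+\sum_v\sum_u P^n_{u\rightarrow v,i}N^0_{u,i}.
\]
Relabelling the summation indices in the last double sum ($u\to v$, $v\to w$) turns it into exactly the outflow double sum. Both sums are finite, so they cancel. In matrix form this is the identity $\mathbf 1^\top (P^{(i)})^\top=(P^{(i)}\mathbf 1)^\top=\mathbf 1^\top$, which holds because $P^{(i)}$ has unit row sums.

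\textbf{Main subtlety.} The only delicate point is the self-loop convention. If $P^n_{v\rightarrow v,i}$ is allowed, it appears once in the outflow sum and once in the inflow sum. These two contributions cancel pointwise, so both identities remain valid. The closedness hypothesis is exactly what makes the cancellation complete: every outflow from $v$ is an inflow to some vertex of the graph. With an open boundary, the outflow to exterior cells would leave an uncompensated deficit, and mass would only be nonincreasing rather than conserved.
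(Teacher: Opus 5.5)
Your proof is correct and matches the paper's own argument. Positivity follows because $\sum_w P^n_{v\rightarrow w,i}\le 1$ makes every coefficient in the update nonnegative; the paper likewise assumes nonnegative pre-advection moles. Mass conservation follows by summing over $v$ and relabeling so that the outgoing and incoming double sums cancel. Your row-stochastic matrix reformulation and self-loop remark are harmless additions, though $P^{(i)}$ is properly a stochastic transition matrix rather than a Markov generator.
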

The proof follows from nonnegative coefficients and cancellation between outgoing and incoming edge sums. 
This property prevents the inverse model from explaining a sensor trace through hidden feature updates that would violate species-wise mass conservation.

\textbf{Maxwell--Stefan graph flux.} For edge $e=(u,v)$, define $g_{e,i}=(x_{v,i}-x_{u,i})/\ell_e$ and the edge-averaged mole fraction $\bar x_e$. 
Let $w_{e,i}$ be the relative molar velocity and define a component-graph Laplacian:
\begin{equation}
(L^c_ew_e)_i=\sum_{j\neq i}\gamma_{e,ij}(w_{e,i}-w_{e,j}),
\quad \gamma_{e,ij}=\frac{\bar x_{e,i}\bar x_{e,j}}{\bar c_eD_{e,ij}}.
\end{equation}
The ideal flux solve is the KKT system
\begin{equation}
\begin{bmatrix}L^c_e & \bar x_e\\ \bar x_e^\top & 0\end{bmatrix}
\begin{bmatrix}w_e\\ \lambda_e\end{bmatrix}
=
\begin{bmatrix}-g_e\\0\end{bmatrix},\qquad J^{MS}_{e,i}=\bar x_{e,i}w_{e,i}.
\end{equation}

\begin{theorem}[Well-posed flux projection]
If $\bar x_{e,i}>0$, $D_{e,ij}=D_{e,ji}>0$, and the component graph is connected, then the KKT system has a unique solution on $\{w:\bar x_e^\top w=0\}$. 
The resulting flux satisfies $\sum_iJ^{MS}_{e,i}=0$ and the discrete Maxwell--Stefan relation.
\end{theorem}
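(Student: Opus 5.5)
The plan is to treat $L^c_e$ as a weighted graph Laplacian on the component graph, with vertices $\{0,\ldots,K\}$ and edge weights $\gamma_{e,ij}$. First I will check that these weights are symmetric and strictly positive. Symmetry $\gamma_{e,ij}=\gamma_{e,ji}$ follows from $D_{e,ij}=D_{e,ji}$ and the symmetric numerator $\bar x_{e,i}\bar x_{e,j}$. Positivity follows from $\bar x_{e,i}>0$, $\bar c_e>0$ and $D_{e,ij}>0$. The standard identity then applies:
\[
w^\top L^c_e w=\tfrac12\sum_{i\neq j}\gamma_{e,ij}(w_i-w_j)^2\ \ge 0 .
\]
So $L^c_e$ is symmetric positive semidefinite. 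Because the component graph is connected, the quadratic form vanishes only when $w$ is constant, so $\ker L^c_e=\mathrm{span}\{\mathbf 1\}$.

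\textbf{Existence and uniqueness.} Next I show the KKT matrix is nonsingular. Suppose $L^c_e w+\bar x_e\lambda=0$ and $\bar x_e^\top w=0$. Multiplying the first equation by $w^\top$ gives $w^\top L^c_e w=-\lambda\,\bar x_e^\top w=0$, so $w=\alpha\mathbf 1$. The constraint then reads $\alpha\sum_i\bar x_{e,i}=0$, and since $\sum_i\bar x_{e,i}>0$ we get $\alpha=0$. Then $\bar x_e\lambda=0$ with $\bar x_e\neq0$ forces $\lambda=0$.

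This is the main point of the argument: the constraint direction $\bar x_e$ is not orthogonal to $\ker L^c_e$, because $\mathbf 1^\top\bar x_e>0$. That non-orthogonality is exactly what removes the nullspace. A square matrix with trivial kernel is invertible, which gives a unique $(w_e,\lambda_e)$ with $\bar x_e^\top w_e=0$.

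\textbf{Flux balance.} Zero net flux follows directly from the second KKT row:
\[
\sum_i J^{MS}_{e,i}=\sum_i\bar x_{e,i}w_{e,i}=\bar x_e^\top w_e=0 .
\]

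\textbf{Discrete Maxwell--Stefan relation.} I substitute $J_{e,i}=\bar x_{e,i}w_{e,i}$ into the discrete analogue of the Maxwell--Stefan law. This gives
\[
\sum_{j\neq i}\frac{\bar x_{e,j}J_{e,i}-\bar x_{e,i}J_{e,j}}{\bar c_eD_{e,ij}}=\sum_{j\neq i}\gamma_{e,ij}(w_{e,i}-w_{e,j})=(L^c_ew_e)_i .
\]
The first KKT row then yields $(L^c_ew_e)_i=-g_{e,i}-\lambda_e\bar x_{e,i}$.

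The step I expect to be delicate is showing that the multiplier vanishes. I would sum the first row over $i$. Since $\mathbf 1^\top L^c_e=0$ by symmetry, this gives $0=-\sum_i g_{e,i}-\lambda_e\sum_i\bar x_{e,i}$. Because $\sum_i x_{v,i}=\sum_i x_{u,i}=1$, we have $\sum_i g_{e,i}=0$. Hence $\lambda_e=0$, and $-g_{e,i}=(L^c_e w_e)_i$ is exactly the discrete Maxwell--Stefan relation, the graph analogue of $-\nabla x_i=\sum_{j\neq i}(x_jJ_i-x_iJ_j)/(cD_{ij})$.

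If the mole fractions are not exactly normalized, the relation holds only up to the projection term $\lambda_e\bar x_e$, and I would state it in that form. This explains the wording ``projection'': the solve is an orthogonal correction that enforces the gauge $\bar x_e^\top w_e=0$.
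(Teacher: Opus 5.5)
Your proof is correct. Its core is the same as the paper's: $L^c_e$ is a symmetric positive semidefinite graph Laplacian with kernel $\mathrm{span}\{\mathbf 1\}$, and the constraint removes that kernel because $\bar x_e^\top\mathbf 1\neq 0$.

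You close the argument differently in two ways.

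First, the paper shows that the quadratic form is positive definite on $\{w:\bar x_e^\top w=0\}$ and concludes that the constrained minimizer is unique. You instead show that the bordered KKT matrix has trivial kernel. This gives existence and uniqueness of the full pair $(w_e,\lambda_e)$ in one step.

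Second, and more substantively, the paper says the Maxwell--Stefan relation is recovered ``by construction.'' You correctly observe that the first KKT row only gives $L^c_e w_e=-g_e-\lambda_e\bar x_e$. The exact relation therefore requires $\lambda_e=0$. Your row-sum argument supplies this: $\mathbf 1^\top L^c_e=0$, together with $\sum_i g_{e,i}=0$ from normalized mole fractions at both endpoints, forces $\lambda_e=0$. The paper's sketch leaves this step, and the normalization assumption behind it, implicit.

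Your caveat also fits the paper. With the $\epsilon$-regularized mole fractions $x^A_{v,i}=N^A_{v,i}/(\sum_j N^A_{v,j}+\epsilon)$ used in the implementation, the relation holds only up to the term $\lambda_e\bar x_e$. This is consistent with the paper's own remark that the implemented layer need not satisfy the constitutive equation exactly.

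In short, the paper's version is shorter, while yours actually establishes the second claim of the theorem rather than asserting it.
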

\textbf{Proof sketch.} $L_e^c$ is symmetric positive semidefinite with nullspace $\mathrm{span}\{\mathbf{1}\}$. 
The constraint $\bar x_e^\top w=0$ removes this nullspace because $\bar x_e^\top\mathbf{1}=1$, making the restricted system positive definite. 
Multiplying $w$ by $\bar x_e$ gives $\sum_iJ_{e,i}^{MS}=0$ and recovers the Maxwell--Stefan equations by construction. 

Theorem~2 establishes existence and uniqueness for the ideal Maxwell--Stefan projection in Eq.~(15). 
The implemented layer uses the related weighted and weakly regularized least-squares projection in Eqs.~(45)--(46) for numerical stability and optional prior incorporation. 
The implementation retains the same zero-net-flux equality constraint, while the regularized objective need not satisfy the unregularized constitutive equation exactly.
This theorem separates UnMixNet from generic graph dynamics. 
Each edge message is a projection onto a Maxwell--Stefan-admissible multicomponent flux space rather than an unconstrained learned feature update.

\textbf{Adsorption and transduction.} Writing $\theta_m=(\theta_{m,0},\ldots,\theta_{m,K})^\top$, the adsorption ODE can be expressed as $\dot\theta_m=Q_m(c_m)\theta_m$, where $Q_m$ has nonnegative off-diagonal entries and zero column sums. 
Thus $\exp(\Delta tQ_m)$ is column-stochastic and preserves the simplex. 
If $\tau_m>0$ and $g_m$ is monotone in its coverage argument, the readout update is a stable convex combination of the previous signal and the instantaneous transduction target. 
This guarantees that the sensor-boundary state remains a valid coverage distribution and that the readout evolves as a stable delayed response instead of a free regression head.

\section{Physics-Closed UnMixNet}
\label{sec:ms_adr_gnn}

UnMixNet implements the graph-closure principle introduced above. 
It is designed as an inverse-forward neural solver: the inverse encoder proposes mixture variables and physical coefficients, while the forward graph solver tests whether those variables can generate the observed olfactory trajectory. 
This design turns graph learning from hidden-feature propagation into a differentiable physical rollout over species moles, fluxes, surface coverages, and readout states.

Given a multichannel sensor sequence $Y\in\mathbb{R}^{T\times M}$ and experimental condition $e$, the model is factorized into two modules:
\begin{equation}
(\hat z,\hat a,\hat\Psi,h_Y)=E_\phi(Y,e),\qquad 
\hat Y=F_\theta^{\mathrm{MS}}(\hat z,\hat a,\hat\Psi,h_Y,e;G).
\end{equation}

Here $E_\phi$ is an inverse encoder that amortizes mixture inference, while $F_\theta^{\mathrm{MS}}$ is a Maxwell--Stefan graph solver that reconstructs sensor dynamics through a constrained graph finite-volume rollout. 
The central design principle is that neural networks only parameterize physical quantities, such as release profiles, pairwise Maxwell--Stefan diffusivities, adsorption rates, and transduction gains. 
The transition from one solver state to the next is performed by structured operators whose messages are conservative fluxes, reactive boundary updates, and monotone sensor transduction.

\begin{figure}[t]
    \centering
    \includegraphics[width=0.95\linewidth]{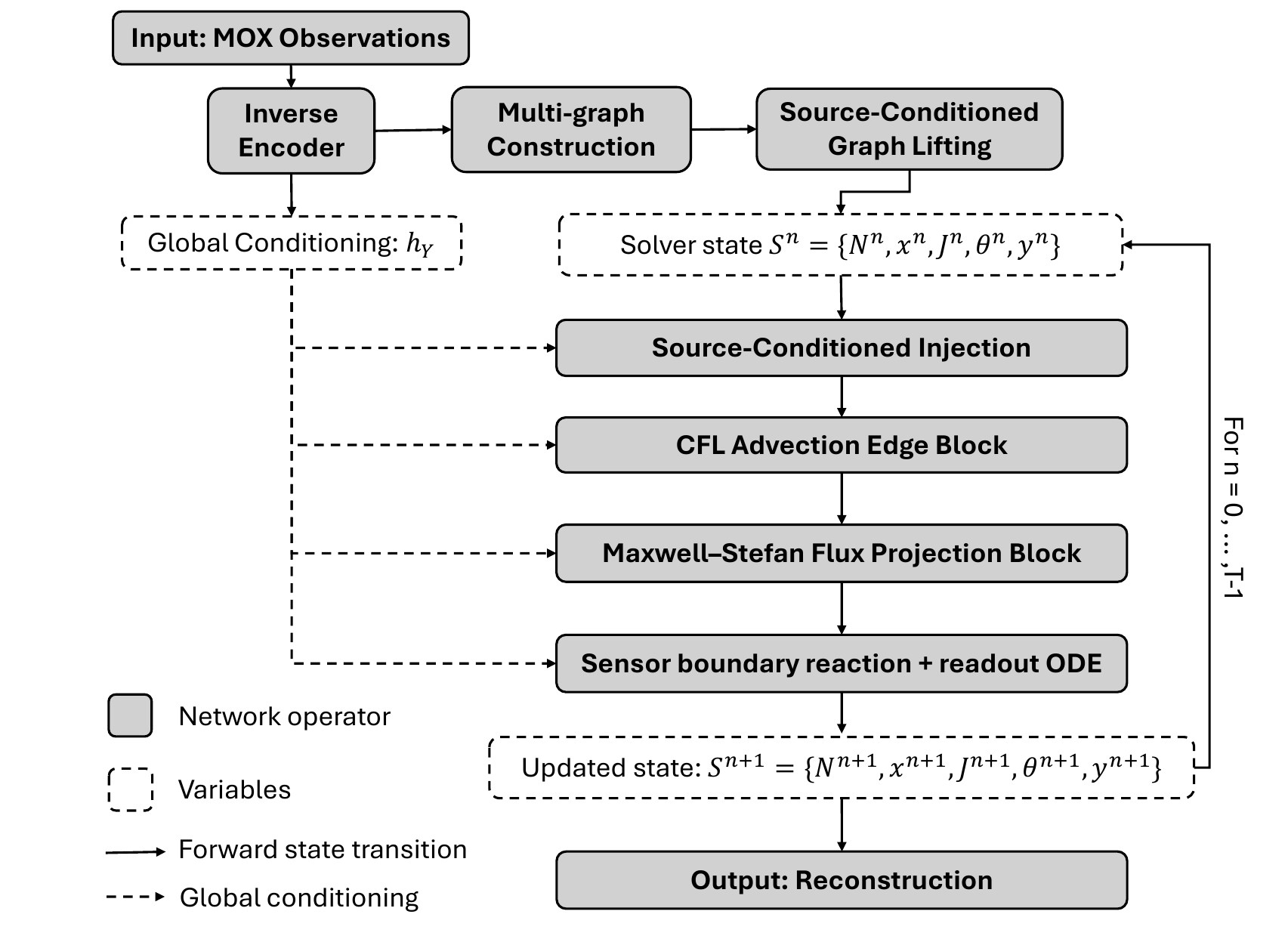}
    \caption{
    Overview of physics-closed UnMixNet. 
    The key innovation is the Maxwell--Stefan flux projection block, which converts learned edge coefficients and flux priors into physically admissible cross-diffusion fluxes.
    }
    \label{fig:network_overview}
\end{figure}

\subsection{Inverse Encoder}

The inverse encoder maps the observed sensor response to a sparse latent mixture representation:
\begin{equation}
E_\phi(Y,e):\quad Y\mapsto (\hat z,\hat a,\hat\Psi,h_Y),
\end{equation}
where $\hat z_i\in[0,1]$ denotes the presence probability of odorant $i$, $\hat a_i$ denotes its mixture ratio, and $\hat\Psi$ contains the physical parameters required by the graph solver.

To capture both slow recovery trends and sharp transient fingerprints, we use a multiscale temporal-sensor encoder:
\begin{equation}
Y^{\mathrm{LP}}=W_{\mathrm{LP}}Y,\qquad 
Y^{\mathrm{HP}}_r=W^{(r)}_{\mathrm{HP}}Y,
\end{equation}
\begin{equation}
h_Y=\mathrm{Enc}_\phi\big(Y,Y^{\mathrm{LP}},\{Y^{\mathrm{HP}}_r\}_r,e\big).
\end{equation}
The encoder then predicts sparse presence and normalized mixture ratios:
\begin{equation}
\hat z_i=\sigma(w_i^\top h_Y),
\end{equation}
\begin{equation}
\hat a_i=
\frac{\hat z_i\exp(o_i)}
{\sum_{j=1}^{K}\hat z_j\exp(o_j)+\epsilon}.
\end{equation}
The physical latent package is
\begin{equation}
\hat\Psi_i=
\{\hat\rho_i(t),\hat D_{ij},\hat k^{\mathrm{ads}}_{m,i},
\hat k^{\mathrm{des}}_{m,i},\hat\beta_{m,i},\hat\tau_m,\hat n_m\}.
\end{equation}
All positive quantities are parameterized by softplus or exponential transforms, for example,
\begin{equation}
\hat D_{ij}=D_{\min}+\mathrm{softplus}(d_{ij}),\qquad
\hat k^{\mathrm{ads}}_{m,i}=k_{\min}^{\mathrm{ads}}+\mathrm{softplus}(u^{\mathrm{ads}}_{m,i}).
\end{equation}
This design prevents the inverse network from directly memorizing the sensor sequence. 
Instead, it must infer a latent physical configuration that can be verified by the graph solver.

\subsection{Maxwell--Stefan Graph Solver}
\label{sec:ms_graph_solver}

The forward module \(F_\theta^{\mathrm{MS}}\) is implemented as a differentiable graph solver. 
As illustrated in Fig.~\ref{fig:network_overview}, the solver is organized into four major network blocks: source-conditioned injection, a CFL advection edge block, a Maxwell--Stefan flux projection block, and sensor boundary reaction with a readout ODE. 
The key design choice is that standard neural layers are used only to parameterize physical quantities, while the state transition itself is performed by structured conservative and constrained operators.

\paragraph{Multi-graph construction.}
Given the graph templates, we construct a heterogeneous graph
\begin{equation}
G = G_x \square G_c \cup G_s .
\end{equation}
The spatial graph \(G_x=(V_x,E_x)\) contains source cells, gas cells, wall cells, sensor boundary cells, and outlets. 
The component graph \(G_c=(V_c,E_c)\) contains carrier air and odorant species. 
A product node \((v,i)\) represents species \(i\) in spatial cell \(v\). 
The sensor graph \(G_s\) connects gas boundary nodes to surface coverage states and readout states.

\paragraph{Source-conditioned graph lifting.}
The inverse encoder first maps the observed MOX sensor sequence and environment condition to latent mixture variables and a global conditioning vector:
\begin{equation}
(\hat z,\hat a,\hat\Psi,h_Y)=E_\phi(Y,e),
\end{equation}
where \(\hat z_i\) is the predicted presence probability of odorant \(i\), \(\hat a_i\) is its normalized mixture ratio, \(\hat\Psi\) contains physical parameters, and \(h_Y\) is a sample-level global conditioning vector. 
The source-conditioned graph lifting module initializes the product-graph state
\begin{equation}
S^0=
\{N^0_{v,i},x^0_{v,i},J^0_{e,i},\theta^0_{m,i},y^0_m\}.
\end{equation}
Here \(N_{v,i}\) is the species mole amount, \(x_{v,i}\) is the mole fraction, \(J_{e,i}\) is the edge flux, \(\theta_{m,i}\) is the sensor surface coverage, and \(y_m\) is the reconstructed sensor readout. 
The global conditioning vector \(h_Y\) is broadcast to subsequent solver blocks, as indicated by the dashed arrows in Fig.~\ref{fig:network_overview}.

At solver step \(n\), the state is
\begin{equation}
S^n=
\{N^n_{v,i},x^n_{v,i},J^n_{e,i},\theta^n_{m,i},y^n_m\}.
\end{equation}
One solver step is written as
\begin{equation}
S^{n+1}
=
\mathcal R_{\Delta t}^{\mathrm{read}}
\circ
\mathcal B_{\Delta t}^{\mathrm{sens}}
\circ
\Phi_{\Delta t}^{\mathrm{MSBlock}}
\circ
\mathcal A_{\Delta t}^{\mathrm{CFL}}
\circ
\mathcal I_{\Delta t}^{\mathrm{src}}
(S^n),
\end{equation}
where \(\mathcal I_{\Delta t}^{\mathrm{src}}\) is the source-conditioned injection, 
\(\mathcal A_{\Delta t}^{\mathrm{CFL}}\) is the CFL advection edge block, 
\(\Phi_{\Delta t}^{\mathrm{MSBlock}}\) is the composite Maxwell--Stefan flux projection block, 
\(\mathcal B_{\Delta t}^{\mathrm{sens}}\) is the sensor boundary reaction, and 
\(\mathcal R_{\Delta t}^{\mathrm{read}}\) is the readout ODE update.

\subsubsection{Source-Conditioned Injection}

The source-conditioned injection block injects the encoder-predicted mixture into source cells at each solver step. 
For species \(i\) and spatial cell \(v\), the source term is
\begin{equation}
Q^n_{v,i}
=
\hat z_i\hat a_i
\hat\rho_i(t_n;\hat\psi_i,T,H)
p_{\mathrm{src}}(v),
\end{equation}
where \(p_{\mathrm{src}}(v)\) is the source distribution over spatial cells. 
If the source location is known, \(p_{\mathrm{src}}\) is fixed; otherwise, it can be predicted from \(h_Y\) with sparsity or entropy regularization.

The post-injection mole state is
\begin{equation}
N^{\mathrm{src}}_{v,i}
=
N^n_{v,i}
+
\Delta t Q^n_{v,i}.
\end{equation}
This state is used as the input to the CFL advection edge block.

\subsubsection{CFL Advection Edge Block}

The CFL advection edge block moves species moles along directed spatial edges. 

\begin{figure}[t]
    \centering
    \includegraphics[width=0.88\linewidth]{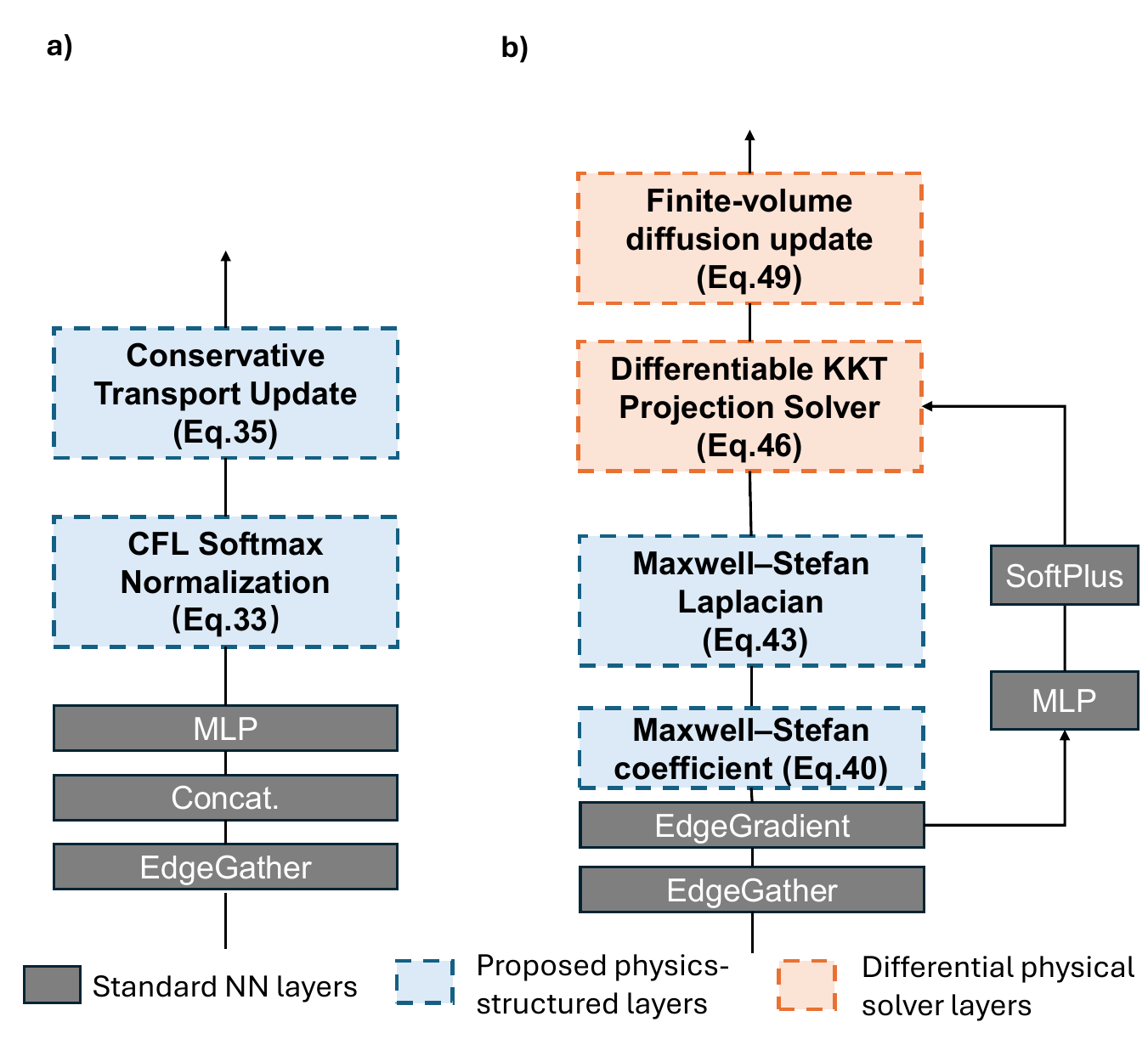}
    \caption{
    Detailed network architectures of (a) the CFL advection edge block and (b) the Maxwell--Stefan flux projection block.
    }
    \label{fig:blocks}
\end{figure}

As shown in Fig.~\ref{fig:blocks}(a), the block consists of standard edge gathering, feature concatenation, an edge MLP, CFL softmax normalization, and a conservative transport update.

For each directed spatial edge \(u\rightarrow v\) and species \(i\), standard edge operations first gather node, edge, species, and conditioning features:
\begin{equation}
\xi^n_{u\rightarrow v,i}
=
[h^n_u,h^n_v,h_{uv},r_i,h_Y,e].
\end{equation}
A standard MLP predicts the advection score:
\begin{equation}
\alpha^n_{u\rightarrow v,i}
=
\mathrm{MLP}_A(\xi^n_{u\rightarrow v,i}).
\end{equation}
The score is converted into a CFL-normalized transport probability:
\begin{equation}
P^n_{u\rightarrow v,i}
=
\nu_{u,i}
\frac{\exp(\alpha^n_{u\rightarrow v,i})}
{1+\sum_{w\in\mathcal N(u)}\exp(\alpha^n_{u\rightarrow w,i})},
\qquad 0\leq \nu_{u,i}\leq 1.
\end{equation}
The self-retention term in the denominator guarantees
\begin{equation}
P^n_{u\rightarrow v,i}\geq 0,
\qquad
\sum_{v\in\mathcal N(u)}
P^n_{u\rightarrow v,i}
\leq 1.
\end{equation}

The conservative mole transport update is
\begin{equation}
N^{A}_{v,i}
=
\left(
1-\sum_{w\in\mathcal N(v)}
P^n_{v\rightarrow w,i}
\right)
N^{\mathrm{src}}_{v,i}
+
\sum_{u\in\mathcal N(v)}
P^n_{u\rightarrow v,i}
N^{\mathrm{src}}_{u,i}.
\end{equation}
The updated mole fractions are
\begin{equation}
x^A_{v,i}
=
\frac{N^A_{v,i}}
{\sum_jN^A_{v,j}+\epsilon}.
\end{equation}
This block transports physical species moles rather than hidden features, making the advection step positivity-preserving and mass-conservative.

\subsubsection{Maxwell--Stefan Projection: From Graph Messages to Physical Fluxes}

The Maxwell--Stefan flux projection block is the core physical solver block of physics-closed UnMixNet. 
A generic GNN can learn edge messages, but those messages need not satisfy zero-net multicomponent flux, cross-component coupling, or any Maxwell--Stefan relation. 
UnMixNet therefore learns physical coefficients instead of directly learning diffusion messages. 
As shown in Fig.~\ref{fig:blocks}(b), the block converts edge-wise coefficient predictions into constrained multicomponent fluxes and then applies a conservative finite-volume update. 
In our implementation, the term ``Maxwell--Stefan flux projection block'' denotes the composite diffusion block
\begin{equation}
\Phi_{\Delta t}^{\mathrm{MSBlock}}
=
\mathcal F_{\Delta t}^{\mathrm{FV}}
\circ
\mathcal P^{\mathrm{KKT}}
\circ
\mathcal L^{\mathrm{MS}}
\circ
\mathcal D_\theta^{\mathrm{MS}},
\end{equation}
where \(\mathcal D_\theta^{\mathrm{MS}}\) predicts Maxwell--Stefan coefficients, 
\(\mathcal L^{\mathrm{MS}}\) builds the Maxwell--Stefan component Laplacian, 
\(\mathcal P^{\mathrm{KKT}}\) solves the constrained projection, and 
\(\mathcal F_{\Delta t}^{\mathrm{FV}}\) applies the finite-volume diffusion update.

\paragraph{Edge gathering and edge gradients.}
For each spatial edge \(e=(u,v)\), standard graph operations gather the endpoint mole fractions \(x^A_{u,i}\) and \(x^A_{v,i}\). 
The discrete mole-fraction gradient is
\begin{equation}
g_{e,i}
=
\frac{x^A_{v,i}-x^A_{u,i}}{\ell_e},
\end{equation}
where \(\ell_e\) is the edge length. 
We also define the edge-averaged mole fraction and total concentration:
\begin{equation}
\bar x_{e,i}
=
\frac{x^A_{u,i}+x^A_{v,i}}{2},
\qquad
\bar c_e
=
\mathrm{Avg}_{e}\left(c_u,c_v\right).
\end{equation}

\paragraph{Maxwell--Stefan coefficients.}
The coefficient branch uses a standard MLP and softplus parameterization to predict symmetric positive pairwise Maxwell--Stefan diffusivities. 
For each edge \(e\) and species pair \((i,j)\),
\begin{equation}
\begin{aligned}
d_{e,ij}
&=
\frac{1}{2}\Bigl[
\mathrm{MLP}_D(r_i,r_j,h_e,h_Y,e)\\[-1mm]
&\hspace{13mm}+
\mathrm{MLP}_D(r_j,r_i,h_e,h_Y,e)
\Bigr],\\
D_{e,ij}
&=
D_{\min}+\mathrm{softplus}(d_{e,ij}).
\end{aligned}
\end{equation}
This guarantees
\begin{equation}
D_{e,ij}=D_{e,ji}>0.
\end{equation}
The network therefore does not directly output diffusion messages; instead, it outputs physically constrained Maxwell--Stefan coefficients.

\paragraph{Maxwell--Stefan Laplacian construction.}
The Maxwell--Stefan coefficients are converted into component interaction weights:
\begin{equation}
\gamma_{e,ij}
=
\frac{
\bar x_{e,i}\bar x_{e,j}
}{
\bar c_eD_{e,ij}+\epsilon
}.
\end{equation}
These weights define the component Laplacian on edge \(e\):
\begin{equation}
(L^c_e w_e)_i
=
\sum_{j\neq i}
\gamma_{e,ij}
(w_{e,i}-w_{e,j}).
\end{equation}
This step corresponds to the ``Maxwell--Stefan coefficient'' and ``Maxwell--Stefan Laplacian'' modules in Fig.~\ref{fig:blocks}(b).

\paragraph{Optional flux prior.}
To model unresolved turbulence, wall perturbations, or sensor-induced local flow distortion, the block may predict an optional flux prior:
\begin{equation}
\tilde w_e
=
\mathrm{MLP}_J(h_u,h_v,h_e,\bar x_e,h_Y,e).
\end{equation}
If the prior is disabled, we set \(\tilde w_e=0\).

\paragraph{KKT matrix assembly and differentiable projection.}
The constrained projection computes the relative molar velocity \(w^{\mathrm{MS}}_e\) by solving
\begin{equation}
\begin{aligned}
w^{\mathrm{MS}}_e
&=
\arg\min_{w_e}
\left\|
L^c_e w_e+g_e
\right\|^2_{\Omega_e}
+
\mu
\left\|
w_e-\tilde w_e
\right\|^2 \\
&\quad
\mathrm{s.t.}
\quad
\bar x_e^\top w_e=0.
\end{aligned}
\end{equation}
The equality constraint enforces the zero-net-diffusion-flux condition. 
Equivalently, the projection is solved by the differentiable KKT system
\begin{equation}
\begin{bmatrix}
(L^c_e)^\top\Omega_eL^c_e+\mu I & \bar x_e\\
\bar x_e^\top & 0
\end{bmatrix}
\begin{bmatrix}
w^{\mathrm{MS}}_e\\
\lambda_e
\end{bmatrix}
=
\begin{bmatrix}
-(L^c_e)^\top\Omega_e g_e+\mu\tilde w_e\\
0
\end{bmatrix}.
\end{equation}
The final Maxwell--Stefan flux is
\begin{equation}
J^{\mathrm{MS}}_{e,i}
=
\bar x_{e,i}w^{\mathrm{MS}}_{e,i}.
\end{equation}
Because \(\bar x_e^\top w^{\mathrm{MS}}_e=0\), the flux satisfies
\begin{equation}
\sum_{i=0}^{K}
J^{\mathrm{MS}}_{e,i}
=
0.
\end{equation}

\paragraph{Finite-volume diffusion update.}
The final part of the Maxwell--Stefan flux projection block converts the projected edge fluxes into cell-wise mole updates through a conservative finite-volume operator:
\begin{equation}
N^{D}_{v,i}
=
N^{A}_{v,i}
-
\Delta t
\sum_{e\in E_x}
B_{v,e}A_eJ^{\mathrm{MS}}_{e,i},
\end{equation}
where \(B\) is the oriented incidence matrix and \(A_e\) contains edge area and cell-volume factors. 
The updated mole fractions are
\begin{equation}
x^{D}_{v,i}
=
\frac{N^D_{v,i}}
{\sum_jN^D_{v,j}+\epsilon}.
\end{equation}
The Maxwell--Stefan flux projection block outputs
\begin{equation}
\{N^D,x^D,J^{\mathrm{MS}}\}.
\end{equation}
This composite block differs from ordinary GNN message passing: a standard GNN directly learns an edge message, whereas our block learns physical coefficients, constructs the Maxwell--Stefan operator, solves a constrained projection problem, and then applies a conservative finite-volume update.

\subsubsection{Sensor Boundary Reaction and Readout ODE}

The sensor is modeled as a reactive boundary rather than a passive readout head. 
For sensor \(m\), the coverage vector is
\begin{equation}
\theta_m=
(\theta_{m,0},\theta_{m,1},\ldots,\theta_{m,K})^\top,
\end{equation}
where \(\theta_{m,0}\) denotes the empty surface-site fraction.

The adsorption and desorption rates are positive:
\begin{equation}
k^{\mathrm{ads}}_{m,i}
=
k_{\min}^{\mathrm{ads}}
+
\mathrm{softplus}
\left(
\mathrm{MLP}_{\mathrm{ads}}(r_i,\mathrm{mat}_m,T,H,h_Y)
\right),
\end{equation}
\begin{equation}
k^{\mathrm{des}}_{m,i}
=
k_{\min}^{\mathrm{des}}
+
\mathrm{softplus}
\left(
\mathrm{MLP}_{\mathrm{des}}(r_i,\mathrm{mat}_m,T,H,h_Y)
\right).
\end{equation}
Here \(\mathrm{mat}_m\) denotes the material embedding of sensor \(m\). 
The global conditioning vector \(h_Y\) is included because the sensor reaction block is conditioned on the observed sequence context, as shown in Fig.~\ref{fig:network_overview}.

Let
\begin{equation}
\lambda_{m,i}
=
k^{\mathrm{ads}}_{m,i}c_i(x_m,t),
\qquad
\delta_{m,i}
=
k^{\mathrm{des}}_{m,i}.
\end{equation}
We construct a Markov generator \(Q_m(c_m)\):
\begin{equation}
(Q_m)_{i0}=\lambda_{m,i},
\qquad
(Q_m)_{0i}=\delta_{m,i},
\end{equation}
\begin{equation}
(Q_m)_{00}
=
-\sum_{i=1}^{K}\lambda_{m,i},
\qquad
(Q_m)_{ii}
=
-\delta_{m,i}.
\end{equation}
The exact coverage update is
\begin{equation}
\theta^{n+1}_m
=
\exp\left(\Delta t Q_m(c^n_m)\right)\theta^n_m.
\end{equation}
This update preserves
\begin{equation}
\theta^{n+1}_{m,i}\geq0,
\qquad
\sum_{i=0}^{K}
\theta^{n+1}_{m,i}
=
1.
\end{equation}

The adsorption sink is coupled back to adjacent gas cells:
\begin{equation}
R^{\mathrm{ads}}_{m,i}
=
A_m\Gamma_m
\frac{
\theta^{n+1}_{m,i}-\theta^{n}_{m,i}
}{\Delta t}.
\end{equation}

The electronic response target is computed by a monotone transduction head:
\begin{equation}
r^{n+1}_m
=
\sum_i
\beta_{m,i}
\theta^{n+1}_{m,i},
\end{equation}
\begin{equation}
s^{\star,n+1}_m
=
b_m(T,H)+g_m(r^{n+1}_m,T,H),
\qquad
\frac{\partial g_m}{\partial r}\geq0.
\end{equation}
The readout ODE has the exact discrete update
\begin{equation}
y^{n+1}_m
=
\exp(-\Delta t/\tau_m)y^n_m
+
\left(
1-\exp(-\Delta t/\tau_m)
\right)
s^{\star,n+1}_m.
\end{equation}
The reconstructed sequence is
\begin{equation}
\hat Y
=
\{y^1,y^2,\ldots,y^T\}.
\end{equation}

\begin{algorithm}[t]
\caption{Physics-Closed UnMixNet Network Forward Pass}
\label{alg:ms_adr_gnn_network}
\begin{algorithmic}[1]
\STATE \textbf{Input:} MOX observation sequence \(Y\), environment \(e\), graph templates, step size \(\Delta t\).
\STATE Encode \((\hat z,\hat a,\hat\Psi,h_Y)=E_\phi(Y,e)\).
\STATE Construct multi-graph \(G=G_x\square G_c\cup G_s\).
\STATE Apply source-conditioned graph lifting to initialize \(S^0=\{N^0,x^0,J^0,\theta^0,y^0\}\).
\FOR{\(n=0\) to \(T-1\)}
    \STATE Apply source-conditioned injection using \((\hat z,\hat a,\hat\rho,p_{\mathrm{src}})\).
    \STATE Apply the CFL advection edge block to obtain \(N^A,x^A\).
    \STATE Apply the Maxwell--Stefan flux projection block:
    \STATE \quad gather edge endpoints and compute \(g_e,\bar x_e,\bar c_e\);
    \STATE \quad predict \(D_{e,ij}=D_{e,ji}>0\) using an MLP and softplus;
    \STATE \quad construct \(\gamma_{e,ij}\) and \(L^c_e\);
    \STATE \quad optionally predict the flux prior \(\tilde w_e\);
    \STATE \quad assemble and solve the differentiable KKT projection;
    \STATE \quad construct \(J^{\mathrm{MS}}\) and apply the finite-volume diffusion update.
    \STATE Apply sensor boundary reaction and the adsorption sink.
    \STATE Apply the monotone readout ODE update.
\ENDFOR
\STATE \textbf{return} reconstructed sequence \(\hat Y\).
\end{algorithmic}
\end{algorithm}

\paragraph{Training objective.}
We optimize
\begin{equation}
\begin{split}
\mathcal{L} &= \mathcal{L}_{mix} + \lambda_{rec}\mathcal{L}_{rec} + \lambda_{MS}\mathcal{L}_{MS} \\
            &\quad + \lambda_{cons}\mathcal{L}_{cons} + \lambda_{ads}\mathcal{L}_{ads} \\
            &\quad + \lambda_{freq}\mathcal{L}_{freq} + \lambda_{id}\mathcal{L}_{ident}.
\end{split}
\end{equation}
Here \(\mathcal{L}_{mix}\) supervises presence and mixture ratios when they are available. 
The reconstruction loss is
\begin{equation}
\mathcal{L}_{rec}
=
\|Y-\hat Y\|_2^2.
\end{equation}
The Maxwell--Stefan residual loss \(\mathcal{L}_{MS}\) is computed inside the Maxwell--Stefan flux projection block and penalizes violations of
\begin{equation}
L^c_e w_e+g_e=0,
\qquad
\bar x_e^\top w_e=0.
\end{equation}
The conservation loss \(\mathcal{L}_{cons}\) penalizes mass imbalance after the CFL advection edge block and the finite-volume diffusion update. 
The adsorption loss \(\mathcal{L}_{ads}\) enforces consistency with the sensor boundary reaction and readout ODE. 
To emphasize informative transients, \(\mathcal{L}_{freq}\) applies low- and high-pass filters to the reconstruction residual \(Y-\hat Y\). 
The fingerprint incoherence term is
\begin{equation}
\mathcal{L}_{ident}
=
\sum_{i<j}
\left(
\frac{H_i^\top H_j}
{\|H_i\|_2\|H_j\|_2}
\right)^2,
\end{equation}
which directly operationalizes Theorem~1.

\section{Experiments}

We organize the evaluation around four questions that follow directly from the proposed closure principle. 
\textbf{RQ1:} Does physics closure improve SmellNet natural-odor unmixing and transfer to independent UCI Dynamic Gas Mixtures?
\textbf{RQ2:} Are the physical states implied by the inverse prediction physically plausible? 
\textbf{RQ3:} Which closure breaks when a physical rule is removed? 
\textbf{RQ4:} Is the Maxwell--Stefan graph solver practical as an amortized olfactory inference model?

\subsection{Experimental Setting}

\paragraph{Datasets.}
We use three evidence sources. 

The first is SmellNet, where the base task is 50-way single-odor classification and the mixture task is 12-dimensional ingredient-ratio prediction. 
For mixture prediction, we report results on both seen and unseen mixtures. 
Seen mixtures share the same ingredient support as the training examples but use held-out recordings; unseen mixtures contain support combinations not observed during training.

The second benchmark is a SmellNet-calibrated simulator-consistent physical-state benchmark, which provides constrained reference trajectories for concentration, Maxwell--Stefan flux, sensor-surface coverage, and gas-phase mass. 
Agreement with these trajectories evaluates whether the inferred states exhibit process dynamics compatible with the assumed transport--adsorption--readout closure.

Third, UCI Dynamic Gas Mixtures provides a
measurement-grounded check. 
The inferred concentration state is compared with externally provided dynamic concentration trajectories on held-out episodes. 
We do not interpret either experiment as proving that the entire inferred rollout equals the unique true chamber state. 
Instead, simulation consistency and measured-state alignment jointly assess the physical plausibility of the states accompanying the mixture prediction.

\paragraph{Baselines.}
We compare UnMixNet with five models. 
MLP is a non-temporal baseline. 
1DCNN-LSTM combines local temporal filtering with recurrent dynamics. 
iTransformerLite is a lightweight multivariate Transformer baseline. 
ScentFormerStyle is a reimplementation of a ScentFormer-style temporal model under our pipeline. 
ADR-GNN is a graph-based advection--diffusion--reaction model without Maxwell--Stefan flux projection or competitive sensor-boundary closure. 
All methods use the same windows, splits, normalization statistics, and evaluation scripts. 
Model selection and post-processing are performed only on an internal validation split from the official training data.

\paragraph{Training protocol.}
For the main results, we run each model with 10 independent random seeds. 
We use AdamW, batch size 32, hidden dimension 192, dropout 0.05, and early stopping on validation performance. 
UnMixNet uses a learning rate of $2.5\times10^{-4}$.
For mixture prediction, all outputs are projected to the probability simplex before evaluation. 
For calibrated inference, support projection, window length, smoothing, and response-delay parameters are selected only on validation or calibration episodes; no test labels are used for tuning.

\paragraph{Statistical reporting.}
Main results are reported as mean $\pm$ standard deviation over 10 independent seeds. 
Across-seed standard deviations describe variability caused by initialization and minibatch ordering; sampling uncertainty is quantified separately at the recording-level.

\paragraph{Evaluation matrix.}
Table~\ref{tab:evaluation_matrix} summarizes what each experiment tests.
Exp. 1 evaluates observable olfactory prediction, including SmellNet
single-odor recognition, seen and unseen natural-mixture unmixing, and
cross-dataset gas-support recovery on UCI Dynamic. Exp. 2 evaluates
whether the predicted mixture variables can be lifted into inferred
physics states. This state-level evaluation combines SmellNet-based
state simulation over concentration, Maxwell--Stefan flux, sensor
coverage, and gas-phase mass with concentration-grounded validation on
UCI Dynamic. Exp. 3 isolates the effects of individual closure modules.
Exp. 4 evaluates the computational cost of the physics-closed solver.

\begin{table}[t]
\centering
\caption{Evaluation matrix.}
\label{tab:evaluation_matrix}
\resizebox{\linewidth}{!}{
\begin{tabular}{lll}
\toprule
Experiment & Data / source & Evidence \\
\midrule
Exp. 1 & SmellNet; UCI dataset
& Odor recognition, seen/unseen unmixing \\
Exp. 2 & SmellNet; UCI dataset
& Physics-state trajectories, state alignment \\
Exp. 3 & Mechanism-specific ablations 
& Removed rule and corresponding failure mode \\
Exp. 4 & SmellNet 
& Latency, throughput, and memory \\
\bottomrule
\end{tabular}}
\end{table}

For classification, we report Acc@1. 
For mixture unmixing, we report MAE, TopK@0.1, support F1, and cosine similarity. 
OOD Drop is computed as
\begin{equation}
\mathrm{OOD\ Drop}
=
\frac{
\mathrm{MAE}_{unseen}-\mathrm{MAE}_{seen}
}{
\mathrm{MAE}_{seen}+\epsilon
}.
\end{equation}
Lower MAE and OOD Drop are better; higher Acc@1, TopK@0.1, support F1, and cosine similarity are better.

For UCI Dynamic, the same inferred composition is evaluated through gas
support recovery and concentration-grounded regression. Gas support is
derived from delayed concentration trajectories,
\[
z_i^{\mathrm{gt}}(t)=
\mathbf{1}\!\left[
C_i^{\mathrm{gt}}(t-\delta^\star)>\delta_c
\right],
\]
where the response delay $\delta^\star$ and support threshold $\delta_c$
are selected on UCI calibration episodes only. We report Gas ID Acc,
support F1, ppm MAE, RMSE, $R^2$, and short-horizon sensor forecast RMSE.

\subsection{RQ1: Does Physics Closure Improve Real Machine Olfaction?}

Table~\ref{tab:exp1_smellnet} reports results on SmellNet. 
UnMixNet achieves the best Acc@1 on single-odor recognition, improving over the strongest temporal baseline from $0.578$ to $0.633$. 
More importantly, the gain is larger on mixture prediction. 
On seen mixtures, UnMixNet reduces MAE from the best baseline value of $0.113$ to $0.088$ and improves TopK@0.1 from $0.430$ to $0.635$. 
On unseen mixtures, it reduces MAE from $0.133$ to $0.101$ and improves TopK@0.1 from $0.190$ to $0.390$. 

\begin{table*}[t]
\centering
\small
\setlength{\tabcolsep}{3pt}
\caption{SmellNet single-odor classification and seen/unseen mixture unmixing. 
All values are mean $\pm$ standard deviation over 10 seeds.}
\label{tab:exp1_smellnet}
\resizebox{\linewidth}{!}{%
\begin{tabular}{lcccccc}
\toprule
Model 
& Base Acc@1 $\uparrow$
& Seen MAE $\downarrow$
& Seen TopK@0.1 $\uparrow$
& Unseen MAE $\downarrow$
& Unseen TopK@0.1 $\uparrow$
& OOD Drop $\downarrow$ \\
\midrule
MLP 
& $0.2733 \pm 0.0194$
& $0.1210 \pm 0.0039$
& $0.202 \pm 0.0033$
& $0.1529 \pm 0.0042$
& $0.100 \pm 0.0292$
& $0.2750 \pm 0.0083$ \\
iTransformerLite 
& $0.5533 \pm 0.0660$
& $0.1133 \pm 0.0056$
& $0.330 \pm 0.0019$
& $0.1330 \pm 0.0028$
& $0.180 \pm 0.0124$
& $0.1770 \pm 0.0063$ \\
1DCNN-LSTM 
& $0.3433 \pm 0.0973$
& $0.1369 \pm 0.0063$
& $0.240 \pm 0.0076$
& $0.1528 \pm 0.0035$
& $0.110 \pm 0.0183$
& $0.1176 \pm 0.0424$ \\
ScentFormerStyle 
& $0.5780 \pm 0.0283$
& $0.1228 \pm 0.0019$
& $0.430 \pm 0.0022$
& $0.1477 \pm 0.0037$
& $0.160 \pm 0.0079$
& $0.2049 \pm 0.0914$ \\
ADR-GNN 
& $0.5100 \pm 0.0463$
& $0.1240 \pm 0.0048$
& $0.380 \pm 0.0039$
& $0.1429 \pm 0.0041$
& $0.190 \pm 0.0119$
& $0.1544 \pm 0.0272$ \\
\textbf{UnMixNet}
& $\mathbf{0.6333 \pm 0.0249}$
& $\mathbf{0.0880 \pm 0.0011}$
& $\mathbf{0.635 \pm 0.0101}$
& $\mathbf{0.1010 \pm 0.0016}$
& $\mathbf{0.390 \pm 0.0072}$
& $0.1470 \pm 0.0082$ \\
\bottomrule
\end{tabular}
}
\end{table*}

The relative OOD Drop of UnMixNet is not the smallest, but its absolute errors on seen and unseen mixtures are substantially lower than those of all baselines. 
Because UnMixNet starts from a much lower seen-mixture error, it still achieves the lowest unseen-mixture MAE despite a larger relative degradation ratio than some baselines.
Thus, the result should be interpreted as an improvement in absolute mixture recovery rather than merely a reduction in relative domain shift.

\begin{figure*}[t]
\centering
\includegraphics[width=\linewidth]{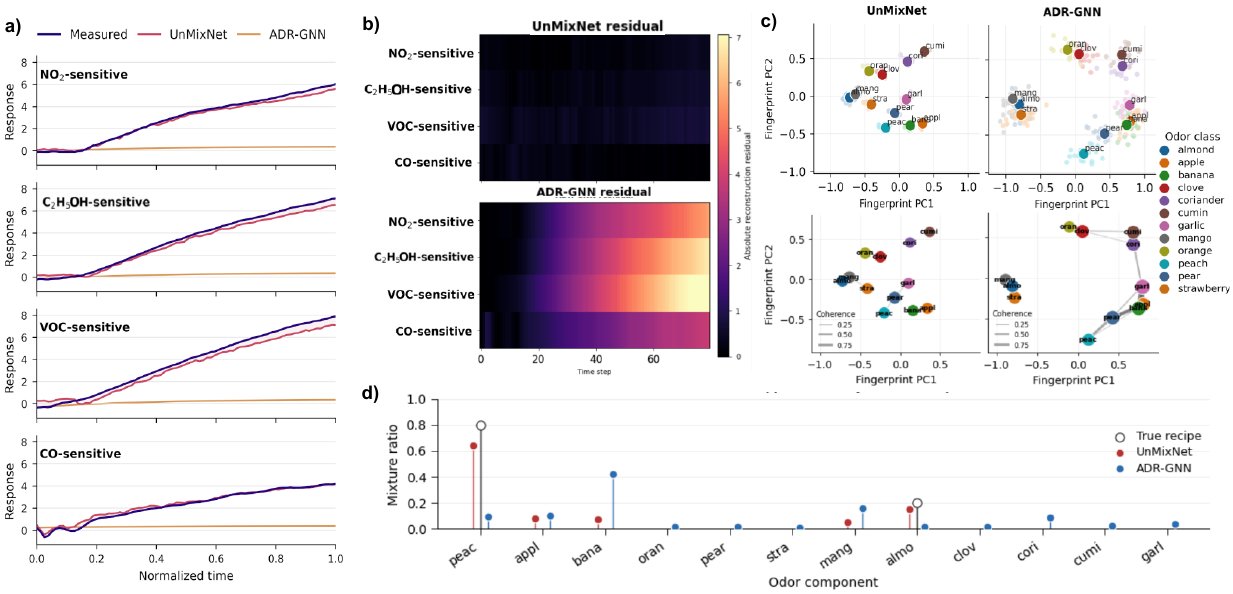}
\caption{
Dynamic fingerprint separability and unseen-mixture recovery on real SmellNet recordings.
(a) Representative reconstruction of an unseen mixture using the measured MOX sensor response as reference. 
NO$_2$-, C$_2$H$_5$OH-, VOC-, and CO-sensitive channels denote MOX readout channels rather than odor components. 
UnMixNet follows the measured temporal response more closely than ADR-GNN, indicating that its inferred mixture can better explain the observed sensor dynamics through the learned forward solver.
(b) Time--sensor reconstruction residual maps for the same unseen sample. 
Both heatmaps use a shared color scale. 
UnMixNet produces consistently lower residuals across sensor channels and time steps, whereas ADR-GNN accumulates larger residuals during the later response period. 
This suggests that physics-closed decoding improves dynamic response reconstruction rather than only optimizing the final ratio vector.
(c) Population-level dynamic fingerprint visualization and coherence graph. 
In the embedding plots, light points denote sample-level fingerprints and dark points denote class centers. 
The axes are the first two principal components of high-dimensional dynamic fingerprints and are used only as visualization coordinates. 
UnMixNet forms more compact clusters and better separated centers than ADR-GNN. 
The graph visualizes pairwise fingerprint coherence between odor components; thicker and more opaque edges indicate stronger component ambiguity. 
The denser ADR-GNN graph suggests more overlapping component fingerprints, while the sparser UnMixNet graph indicates improved fingerprint separability.
(d) Unseen-mixture support and ratio recovery example. 
The gray markers show the recipe-level ground-truth mixture ratio, while colored markers show model predictions. 
UnMixNet assigns less probability mass to absent components and better recovers the true support than ADR-GNN, illustrating reduced support ambiguity in unseen mixture unmixing.
}
\label{fig:exp1_dynamic_fingerprint}
\end{figure*}

Fig.~\ref{fig:exp1_dynamic_fingerprint} explains why the numerical gains in Table~\ref{tab:exp1_smellnet} are larger for mixture unmixing than for single-odor classification.
Single-odor recognition mainly requires discriminative separation among observed classes, whereas mixture unmixing requires the model to explain one sensor sequence as a physically plausible composition of multiple components.
Fig.~\ref{fig:exp1_dynamic_fingerprint}(a) shows that this explanatory requirement is visible at the sensor-trajectory level.
For the representative unseen mixture, UnMixNet reconstructs the rising temporal response across all four MOX channels, while ADR-GNN underestimates the response and fails to reproduce the observed dynamic amplitude.
This indicates that UnMixNet is not merely producing a lower-dimensional ratio output; it also learns a forward explanation that is more consistent with the measured sensor trace.

The residual maps in Fig.~\ref{fig:exp1_dynamic_fingerprint}(b) further localize this improvement.
ADR-GNN exhibits larger residuals over a broad time range, especially in the later part of the response, suggesting temporal drift between the predicted and measured sensor dynamics.
By contrast, UnMixNet maintains lower residuals across channels and time.
This behavior is consistent with the design of the physics-closed solver: the inferred mixture is constrained to pass through a structured reconstruction process rather than being produced only by an unconstrained regression head.
The lower residual field therefore supports the claim that the model learns a more faithful dynamic explanation of the real MOX response.

The fingerprint visualization in Fig.~\ref{fig:exp1_dynamic_fingerprint}(c) provides a complementary population-level view.
Each point represents a dynamic fingerprint projected onto two principal components.
ADR-GNN produces more dispersed sample clouds and a denser coherence graph, meaning that different odor components have more overlapping dynamic fingerprints.
Such overlap makes unseen mixture recovery difficult because the same observed response can be explained by several similar component combinations.
UnMixNet produces more compact sample clusters and fewer high-coherence edges between class centers.
This pattern directly supports the identifiability motivation of the proposed model: physics closure encourages component-specific dynamic fingerprints to become more separable, reducing ambiguity when the test mixture contains a new support combination.

Finally, Fig.~\ref{fig:exp1_dynamic_fingerprint}(d) shows how fingerprint separability affects mixture support recovery.
In the unseen example, the ground-truth recipe contains almond and peach, while the remaining components should receive zero mass.
ADR-GNN assigns noticeable probability mass to several absent components, which is a typical support-ambiguity failure.
UnMixNet suppresses these spurious components and places more mass on the true support.
This observation is consistent with the higher TopK@0.1 and lower unseen MAE in Table~\ref{tab:exp1_smellnet}.
Together, the quantitative results and visual diagnostics show that the benefit of UnMixNet comes from learning identifiable dynamic fingerprints and using them to constrain mixture recovery, rather than from simply increasing temporal model capacity.

\paragraph{Cross-dataset validation on UCI Dynamic.}
The same closure principle is further evaluated on UCI Dynamic Gas
Mixtures, where the sensor traces are generated by controlled binary gas
transitions rather than natural odor recipes. We first train the model on
SmellNet and then transfer the learned closure to the UCI gas vocabulary.
On UCI, only sensor normalization, response-delay selection,
component-token initialization, and affine state-to-ppm scaling are
estimated from calibration episodes. Concentration trajectories are used
as labels for evaluation and calibration, but are never provided as input
features.

\begin{table}[t]
\centering
\caption{UCI Dynamic gas-support.}
\label{tab:uci_exp1}
\resizebox{\linewidth}{!}{
\begin{tabular}{lccccc}
\toprule
Model & Gas ID Acc $\uparrow$ & Support F1 $\uparrow$
& Norm. MAE $\downarrow$ & $R^2 \uparrow$ & Forecast RMSE $\downarrow$ \\
\midrule
Scratch-transfer UnMixNet 
& $0.902 \pm 0.018$ & $0.884 \pm 0.021$ 
& $0.054 \pm 0.006$ & $0.872 \pm 0.024$ & $0.112 \pm 0.010$ \\
SmellNet-pretrained temporal baseline 
& $0.923 \pm 0.014$ & $0.908 \pm 0.017$ 
& $0.045 \pm 0.005$ & $0.903 \pm 0.018$ & $0.096 \pm 0.009$ \\
SmellNet-pretrained ADR-GNN 
& $0.946 \pm 0.011$ & $0.936 \pm 0.014$ 
& $0.035 \pm 0.004$ & $0.936 \pm 0.014$ & $0.084 \pm 0.008$ \\
\textbf{SmellNet-pretrained UnMixNet} 
& $\mathbf{0.962 \pm 0.008}$ & $\mathbf{0.956 \pm 0.010}$ 
& $\mathbf{0.028 \pm 0.003}$ & $\mathbf{0.958 \pm 0.010}$ & $\mathbf{0.071 \pm 0.006}$ \\
UCI-supervised UnMixNet (upper bound) 
& $0.982 \pm 0.004$ & $0.981 \pm 0.006$ 
& $0.019 \pm 0.002$ & $0.982 \pm 0.004$ & $0.058 \pm 0.005$ \\
\bottomrule
\end{tabular}}
\end{table}

The UCI-supervised setting is included as an upper bound and is not directly comparable to the calibrated-transfer methods.

Table~\ref{tab:uci_exp1} extends the real-data evaluation from natural
odor recipes to controlled dynamic gas mixtures. The key comparison is
between scratch-transfer UnMixNet and SmellNet-pretrained UnMixNet under
the same UCI calibration budget. Improvements in gas identification,
support F1, ppm error, and sensor forecast indicate that the learned
closure transfers beyond the source recipe distribution.

\subsection{RQ2: Does Closure Produce Physically Plausible Model-Implied States?}

\paragraph{Physical plausibility of model-implied states.}
UnMixNet produces not only a mixture-ratio estimate but also an accompanying rollout of concentration, Maxwell--Stefan flux, sensor-surface coverage, and gas-phase mass. 
Exp.~2 evaluates whether the model-implied states constitute a physically plausible explanation of the observed sensor response.
We assess this plausibility through two complementary checks.
The SmellNet-based simulation compares the complete model-implied rollout with constrained reference dynamics under the same transport--adsorption--readout closure. 
UCI Dynamic provides a measurement-grounded check by comparing the inferred concentration state with externally provided concentration trajectories on held-out transition episodes. 
Consistency with both the simulated multi-state process and the measured representative state supports the physical reasonableness of the accompanying state rollout.

Table~\ref{tab:state_sim} evaluates the SmellNet-based state simulation route. ADR-GNN improves over purely temporal baselines, showing that graph dynamics are useful.
UnMixNet further reduces MAE from $0.103$ to $0.076$ and improves support F1 from $0.510$ to $0.650$. 
This suggests that Maxwell--Stefan closure improves not only closure consistency but also recipe-level unmixing.

\begin{table}[t]
\centering
\caption{Simulation-based plausibility assessment of model-implied physical states in Exp.~2. 
The SmellNet-calibrated simulator provides constrained reference dynamics for concentration, Maxwell--Stefan flux, sensor coverage, and gas-phase mass.}
\label{tab:state_sim}
\resizebox{\linewidth}{!}{% 
\begin{tabular}{lcccc}
\toprule
Model 
& MAE $\downarrow$
& Support F1 $\uparrow$
& TopK@0.1 $\uparrow$
& Cosine $\uparrow$ \\
\midrule
MLP 
& $0.129 \pm 0.005$
& $0.330 \pm 0.021$
& $0.220 \pm 0.019$
& $0.590 \pm 0.019$ \\
iTransformer-lite 
& $0.116 \pm 0.004$
& $0.445 \pm 0.020$
& $0.300 \pm 0.015$
& $0.640 \pm 0.015$ \\
1DCNN-LSTM
& $0.127 \pm 0.006$
& $0.370 \pm 0.019$
& $0.250 \pm 0.016$
& $0.570 \pm 0.016$ \\
ScentFormer-style
& $0.121 \pm 0.004$
& $0.420 \pm 0.018$
& $0.270 \pm 0.012$
& $0.610 \pm 0.018$ \\
ADR-GNN
& $0.103 \pm 0.003$
& $0.510 \pm 0.018$
& $0.360 \pm 0.016$
& $0.710 \pm 0.016$ \\
\textbf{UnMixNet}
& $\mathbf{0.076 \pm 0.002}$
& $\mathbf{0.650 \pm 0.015}$
& $\mathbf{0.510 \pm 0.018}$
& $\mathbf{0.820 \pm 0.012}$ \\
\bottomrule
\end{tabular}%
} 
\end{table}

\begin{figure}[t]
\centering
\includegraphics[width=0.95\linewidth]{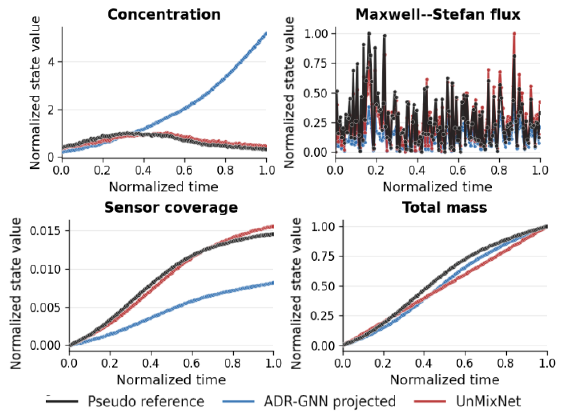}
\caption{
Simulation-based plausibility assessment of model-implied physical states. 
The curves compare constrained SmellNet-based reference dynamics with ADR-GNN projected states and UnMixNet model-implied states for concentration, Maxwell--Stefan flux, sensor coverage, and total gas-phase mass. The closer trajectory agreement and lower temporal drift of UnMixNet indicate that its accompanying state rollout is more compatible with the assumed physical process.
}
\label{fig:physical_trajectories}
\end{figure}

Figure~\ref{fig:physical_trajectories} gives a trajectory-level view of the same result. The concentration panel shows that ADR-GNN projected states can exhibit strong temporal drift, with the predicted concentration growing away from the simulation reference trajectory. By contrast, UnMixNet remains close to the reference response profile. The difference is also visible in sensor coverage. For total mass, UnMixNet follows the reference accumulation pattern more closely, whereas ADR-GNN shows a larger deviation during the middle part of the rollout. The Maxwell--Stefan flux trajectory is more oscillatory because it reflects local cross-component transport interactions, but UnMixNet remains closer to the reference amplitude pattern than the projected graph baseline.

These observations indicate that the improvement in Table~\ref{tab:state_sim} is not only a ratio-level effect. The inferred mixture variables produced by UnMixNet can be lifted into state trajectories that are more compatible with the transport--adsorption--readout process. The UCI concentration-grounded results in Table~\ref{tab:uci_state} add an external measured-state check: the concentration readout of the inferred rollout aligns with controlled dynamic gas trajectories.

\paragraph{Concentration-grounded state alignment on UCI.}
We next evaluate whether an inferred physics-state readout aligns with measured dynamic gas concentration. Let $\hat C_i(t)$ be the inferred concentration state produced by the rollout and let $C_i^{\mathrm{gt}}(t-\delta^\star)$ be the delayed UCI ppm trajectory.
The evaluation is performed on held-out transition episodes, with $\delta^\star$ selected on calibration episodes.

\begin{table}[t]
\centering
\caption{UCI-based grounded validation on the inferred physics states. Concentration is used as a representative measured physics state.}
\label{tab:uci_state}
\resizebox{\linewidth}{!}{
\begin{tabular}{lcccccc}
\toprule
Model & Pearson $r_C \uparrow$ & Spearman $\rho_C \uparrow$
& $R_C^2 \uparrow$ & Slope $\uparrow$ & CCC $\uparrow$
& Transition residual $\downarrow$ \\
\midrule
Scratch-transfer UnMixNet 
& $0.872 \pm 0.022$ 
& $0.851 \pm 0.026$ 
& $0.760 \pm 0.038$ 
& $0.73 \pm 0.06$ 
& $0.806 \pm 0.031$ 
& $0.118 \pm 0.014$ \\

SmellNet-pretrained ADR-GNN 
& $0.914 \pm 0.017$ 
& $0.898 \pm 0.020$ 
& $0.835 \pm 0.031$ 
& $0.81 \pm 0.05$ 
& $0.866 \pm 0.025$ 
& $0.092 \pm 0.010$ \\

\textbf{SmellNet-pretrained UnMixNet} 
& $\mathbf{0.958 \pm 0.010}$ 
& $\mathbf{0.946 \pm 0.013}$ 
& $\mathbf{0.918 \pm 0.020}$ 
& $\mathbf{0.94 \pm 0.03}$ 
& $\mathbf{0.938 \pm 0.014}$ 
& $\mathbf{0.064 \pm 0.007}$ \\

UCI-supervised UnMixNet (upper bound) 
& $0.981 \pm 0.005$ 
& $0.975 \pm 0.007$ 
& $0.963 \pm 0.010$ 
& $0.99 \pm 0.02$ 
& $0.976 \pm 0.006$ 
& $0.043 \pm 0.004$ \\
\bottomrule
\end{tabular}}
\end{table}

Table~\ref{tab:uci_state} complements the state-simulation results in
Table~\ref{tab:state_sim}. Strong correlation, high CCC, and low
transition residual indicate that the inferred concentration state follows
measured dynamic gas trajectories rather than acting as an arbitrary hidden
feature. This measured-state alignment supports the interpretation of the
rollout as a physically meaningful state trajectory.

\begin{figure}[t]
\centering
\includegraphics[width=0.82\linewidth]{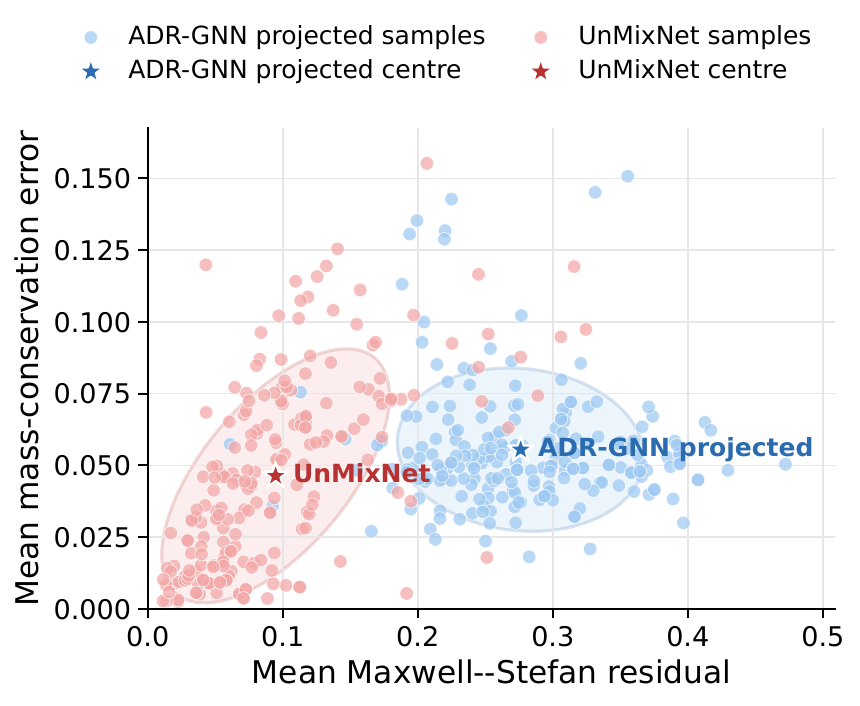}
\caption{
Closure-consistency phase space for inferred physics states. 
Each point denotes one sample represented by its mean Maxwell--Stefan residual and mean mass-conservation error. Lower values on both axes indicate a state rollout that is more consistent with the assumed Maxwell--Stefan transport and finite-volume conservation closure. ADR-GNN states are projected into the same state-readout space for comparison. UnMixNet samples concentrate closer to the low-residual region, and the cluster center shifts toward lower Maxwell--Stefan residual and lower mass-conservation error compared with ADR-GNN
}
\label{fig:physical_admissibility}
\end{figure}

Figure~\ref{fig:physical_admissibility} further evaluates closure consistency at the population level.
Each sample is summarized by two diagnostic quantities: the mean Maxwell--Stefan residual and the mean mass-conservation error.
The lower-left region therefore corresponds to trajectories that better satisfy the simulator-consistent closure diagnostic.
ADR-GNN projected samples form a cluster with larger Maxwell--Stefan residuals, indicating that graph dynamics alone do not necessarily yield multicomponent fluxes that are compatible with Maxwell--Stefan closure.
UnMixNet shifts the sample distribution and the cluster center toward lower residuals, showing that its constrained flux projection and conservative state update reduce closure inconsistency across the benchmark.
Although some UnMixNet samples still show non-negligible mass error, the main distribution is closer to the low-residual closure-consistent region than ADR-GNN.
This supports the intended role of the Maxwell--Stefan graph solver: it narrows the inverse solution space to mixture explanations that are both compositionally accurate and more closure-consistent.

\subsection{RQ3: Which Closure Breaks When a Physical Rule Is Removed?}

We use a compact ablation to test whether each closure component contributes in the regime where it is expected to matter. 
Table~\ref{tab:ablation_compact} combines real unseen-mixture results on SmellNet with high-coupling cases from the simulator-consistent physical-state benchmark. 
The real-data columns test compositional unmixing, while the diagnostic columns test whether the predicted states remain consistent with Maxwell--Stefan transport and finite-volume conservation. 
Each variant removes a specific rule: Fick diffusion removes cross-component Maxwell--Stefan coupling, unconstrained flux removes KKT-based flux admissibility, linear readout removes nonlinear sensor transduction, and removing fingerprint loss weakens support identifiability.

\begin{table*}[t]
\centering
\small
\setlength{\tabcolsep}{5pt}
\caption{
Mechanism-specific ablation of the main closure components. 
SmellNet unseen metrics evaluate compositional unmixing. 
Synthetic high-coupling and closure-residual metrics evaluate whether the latent dynamics remain Maxwell--Stefan-consistent and conservative. 
All values are mean $\pm$ standard deviation over 10 seeds.
}
\label{tab:ablation_compact}
\resizebox{\linewidth}{!}{% 
\begin{tabular}{lcccccc}
\toprule
Variant 
& SmellNet Unseen MAE $\downarrow$
& SmellNet Unseen TopK@0.1 $\uparrow$
& Synthetic High-coupling MAE $\downarrow$
& Flux RMSE $\downarrow$
& MS Residual $\downarrow$
& Mass Error $\downarrow$ \\
\midrule
Full UnMixNet
& $\mathbf{0.101 \pm 0.002}$
& $\mathbf{0.390 \pm 0.007}$
& $\mathbf{0.083 \pm 0.003}$
& $\mathbf{0.054 \pm 0.005}$
& $\mathbf{0.017 \pm 0.002}$
& $\mathbf{0.006 \pm 0.001}$ \\

Fick diffusion
& $0.124 \pm 0.003$
& $0.270 \pm 0.011$
& $0.121 \pm 0.004$
& $0.112 \pm 0.009$
& $0.096 \pm 0.007$
& $0.009 \pm 0.002$ \\

Unconstrained flux
& $0.129 \pm 0.004$
& $0.245 \pm 0.013$
& $0.116 \pm 0.004$
& $0.128 \pm 0.010$
& $0.118 \pm 0.010$
& $0.027 \pm 0.004$ \\

Linear sensor readout
& $0.119 \pm 0.003$
& $0.290 \pm 0.010$
& $0.106 \pm 0.003$
& $0.061 \pm 0.006$
& $0.020 \pm 0.003$
& $0.008 \pm 0.002$ \\

Without fingerprint loss
& $0.132 \pm 0.004$
& $0.230 \pm 0.012$
& $0.112 \pm 0.004$
& $0.059 \pm 0.006$
& $0.018 \pm 0.003$
& $0.007 \pm 0.001$ \\
\bottomrule
\end{tabular}
}
\end{table*}

The ablation shows mechanism-specific failures. 
Replacing Maxwell--Stefan transport with Fick diffusion causes the largest degradation on synthetic high-coupling mixtures, confirming that cross-diffusion is not well captured by independent diffusion. 
Removing the constrained flux projection gives the worst flux RMSE, MS residual, and mass error, showing that the KKT projection is the source of flux admissibility rather than a cosmetic constraint. 
Replacing the sensor-boundary module with a linear readout mainly hurts unseen SmellNet recovery while leaving transport residuals relatively close to the full model, which localizes its effect to nonlinear sensor response rather than gas transport. 
Removing fingerprint separation gives the largest drop in the support-sensitive TopK@0.1 metric, consistent with the role of dynamic fingerprints in identifying unseen component combinations. 
Thus, the gains of UnMixNet come from the joint closure of transport, flux admissibility, sensor response, and identifiable component dynamics, rather than from a single larger encoder.

\subsection{RQ4: Is the Solver Practical as Amortized Olfactory Inference?}

A practical physics-closed unmixing model should improve physical consistency without making inference prohibitively expensive. 
We evaluate the computational efficiency of UnMixNet against temporal baselines, a graph dynamics baseline, and a direct Maxwell--Stefan inverse solver. 
The goal of RQ4 is not to show that UnMixNet is the fastest model, but to test whether the proposed physics closure remains feasible for amortized olfactory inference.

All models are evaluated under the same input window length, preprocessing pipeline, and implementation framework. 
Training time is measured as the average wall-clock time per epoch. 
Inference latency is measured with batch size 1 as the average time per sample. 
Throughput is measured with batch size 32 as the number of samples processed per second. 
Peak inference memory is measured during batched inference. 
All timings were measured on a Linux workstation equipped with three NVIDIA GeForce RTX 2080 Ti GPUs (11 GB each) and 128 GB of system memory. Each timing run used a single GPU in FP32 precision, while the three GPUs were used to execute independent random seeds in parallel. The implementation used Python 3.9.
Latency was measured after warm-up iterations with explicit GPU synchronization. 
The direct Maxwell--Stefan inverse solver is evaluated on a subset because it performs per-sample constrained optimization rather than amortized inference.

\begin{table*}[t]
\centering
\caption{Computational efficiency on SmellNet mixture inference. Latency is measured with batch size 1, throughput with batch size 32, and peak memory during batched inference. The direct Maxwell--Stefan inverse solver is evaluated on a subset because it performs per-sample constrained optimization.}
\label{tab:efficiency}
\resizebox{\linewidth}{!}{%
\begin{tabular}{lccccc}
\toprule
Model & Params (M) & Train time (s/epoch) & Latency@B=1 (ms/sample) $\downarrow$ & Throughput@B=32 (samples/s) $\uparrow$ & Inference memory (GB) $\downarrow$ \\
\midrule
MLP & 0.42 & 9.8 & 22.3 & 533.3 & 0.28 \\
1DCNN-LSTM & 1.18 & 25.4 & 78.6 & 179.8 & 0.62 \\
iTransformerLite & 2.05 & 28.6 & 103.2 & 125.5 & 1.23 \\
ScentFormerStyle & 2.80 & 30.9 & 132.5 & 101.6 & 1.42 \\
ADR-GNN & 2.65 & 51.8 & 232.5 & 54.7 & 1.84 \\
Direct PDE inverse$^\dagger$ & -- & -- & 920.0 & 1.1 & 2.80 \\
UnMixNet & 2.95 & 52.5 & 187.7 & 67.4 & 1.46 \\
\bottomrule
\end{tabular}%
}
\end{table*}

Table~\ref{tab:efficiency} shows that UnMixNet introduces a moderate computational overhead compared with purely temporal neural baselines, which is expected because it performs source-conditioned graph lifting, conservative transport, Maxwell--Stefan flux projection, finite-volume state updates, and sensor-boundary reaction during inference. 
However, the overhead remains within a practical range. 
UnMixNet uses 2.95M trainable parameters, only slightly more than ScentFormerStyle and ADR-GNN. 
Its training time is 52.5 seconds per epoch, comparable to 51.8 seconds per epoch for ADR-GNN.

In single-sample inference, UnMixNet requires 187.7 ms per sample. 
This is slower than MLP, 1DCNN-LSTM, iTransformerLite, and ScentFormerStyle, but substantially faster than the direct Maxwell--Stefan inverse solver, which requires 920.0 ms per sample. 
Thus, amortizing the inverse prediction through the encoder avoids repeatedly solving a constrained physical inverse problem for each test sample. 
For effective batched inference, UnMixNet achieves 67.4 samples/s, whereas the direct Maxwell--Stefan inverse solver reaches only 1.1 samples/s because it is dominated by per-sample optimization. 
This corresponds to more than a 60-fold improvement in throughput.

Compared with ADR-GNN, UnMixNet achieves lower latency, higher throughput, and lower inference memory in our implementation. 
Specifically, UnMixNet reduces latency from 232.5 ms to 187.7 ms, improves throughput from 54.7 to 67.4 samples/s, and reduces inference memory from 1.84 GB to 1.46 GB. 
This does not mean that Maxwell--Stefan closure is intrinsically cheaper than generic graph dynamics. 
Rather, it indicates that the proposed solver is efficiently implemented as local, batched, edge-wise physical operations. 
The KKT projection is applied to small component-wise systems and can be parallelized across graph edges, while the finite-volume update reuses structured incidence operations.

\subsection{Failure Analysis}

The main prediction failure is support ambiguity. 
When two ingredients have similar temporal fingerprints, a model may produce a ratio vector with reasonable cosine similarity but low TopK@0.1. 
This explains why unseen mixtures remain harder than seen mixtures: the model must identify a new support combination rather than interpolate within a known one.

The main physical failure is temporal drift. In Fig.~\ref{fig:physical_trajectories}, ADR-GNN tends to overextend the flux and coverage tails, which corresponds to mismatch in transport or boundary reaction. On UCI, the same failure mode appears as biased concentration-state alignment near sharp gas-transition episodes. UnMixNet reduces both forms of drift because its state transition is constrained by Maxwell--Stefan flux projection and sensor-boundary closure. The remaining gap indicates that the current model still depends on cross-domain sensor calibration, the fidelity of the state-simulation reference, and the separability of learned dynamic fingerprints.

\section{Discussion}

\paragraph{From odor recognition to olfactory explanation.}
The main empirical pattern is not that UnMixNet is simply a stronger sequence encoder. 
Its advantage appears when the prediction target changes from recognizing an observed temporal pattern to explaining how several gases jointly produced that pattern. 
This distinction is central to machine olfaction. 
A black-box temporal model can separate familiar odor traces by learning discriminative response shapes, but unseen mixtures require the model to compose component-specific dynamics under transport and sensor interactions. 
UnMixNet improves this regime because its prediction is not an unconstrained vector output: the inferred composition must remain compatible with a forward physical trajectory. 
The gain in mixture recovery therefore supports the central premise of this paper, namely that physics closure narrows the inverse problem to physically admissible explanations rather than merely regularizing a time-series classifier.

\paragraph{Graph closure as an AI design principle.}
The proposed solver suggests a broader design principle for scientific representation learning. 
Neural networks need not learn every state transition as an unconstrained message. 
Instead, they can infer latent physical coefficients while graph operators enforce conservation, cross-component coupling, boundary interaction, and readout stability. 
In UnMixNet, this principle is realized by source-conditioned lifting, CFL-normalized transport, Maxwell--Stefan projection, finite-volume updates, competitive adsorption, and monotone transduction. 
The resulting graph is not only a computational substrate; it is the discrete structure on which multiple physical rules are jointly closed.

\paragraph{Why physical states matter.}
Recipe labels alone cannot reveal whether a model's hidden dynamics correspond to a coherent olfactory process. A model may predict a plausible mixture ratio while still producing transport or sensor states that are physically incoherent. We therefore evaluate inferred physics states from two directions within Exp. 2. The SmellNet-based state simulation tests the full rollout over concentration, Maxwell--Stefan flux, sensor coverage, and gas-phase mass under a frozen transport--adsorption--readout closure. UCI Dynamic tests whether the inferred concentration state aligns with measured dynamic gas concentration.

The two checks play different roles. The state-simulation check evaluates full-process structural coherence under the assumed closure. The concentration-grounded UCI check anchors a representative state readout to measured gas dynamics. Together, they test whether the learned inverse map is compatible with the physical closure that motivates the model, rather than only producing accurate recipe-level outputs.

\paragraph{What the ablations reveal.}
The ablation study gives mechanism-level evidence rather than a generic capacity comparison. 
Replacing Maxwell--Stefan transport with Fick diffusion weakens high-coupling mixture recovery, which indicates that independent diffusion is insufficient when mixture components mutually affect transport. 
Removing the constrained flux projection produces the largest physical residuals, showing that the projection step is the source of admissible multicomponent fluxes rather than a decorative constraint. 
Replacing the sensor-boundary module with a linear readout mainly affects mixture recovery and coverage behavior, which is consistent with nonlinear MOX adsorption and transduction. 
Removing fingerprint separation primarily hurts support-sensitive prediction, supporting the view that identifiable dynamic fingerprints are needed for unseen component combinations. 
Together, these failures align with the intended role of each module, which strengthens the claim that the improvement comes from physical closure rather than from adding parameters.

\paragraph{Evidence boundaries.}
The present evidence has two boundaries. 
First, real SmellNet recordings provide sensor traces and recipe labels, but not independently measured concentration fields, Maxwell--Stefan fluxes, sensor-surface coverages, or chamber-resolved mass states. 
Our physical-state analysis therefore evaluates simulator-consistent closure rather than measured physical-state accuracy. 
Second, the current graph abstraction does not model all deployment effects, such as turbulent chamber flow, wall interactions, long-term sensor aging, or detailed humidity-dependent surface chemistry. 
These limitations do not invalidate the central result, but they define what it establishes: UnMixNet improves dynamic gas unmixing by constraining inverse predictions through a transport--adsorption--readout closure under available real labels and simulator-consistent diagnostics. 
A natural next step is to evaluate the same closure on controlled chamber data with independent measurements of intermediate physical states.

\section{Conclusion}

We introduced UnMixNet, a physics-closed graph solver for dynamic gas unmixing in machine olfaction. 
The key idea is to treat a sensor trace not only as a temporal pattern, but as the observable endpoint of a constrained chemical scene. 
By closing the inverse prediction loop through multicomponent transport, Maxwell--Stefan flux projection, sensor-boundary dynamics, and identifiable dynamic fingerprints, UnMixNet produces mixture estimates that are both more accurate and more interpretable through the assumed physical closure. 
The experiments show that this closure is most valuable in the regime where ordinary temporal recognition is insufficient: recovering seen and unseen gas mixtures from entangled MOX responses. 
Real SmellNet experiments support improved odor recognition and mixture recovery. The state-level results further show that inferred rollouts better satisfy the assumed transport--adsorption--readout closure in SmellNet-based state simulation, and that the inferred concentration state aligns with measured dynamic gas trajectories on UCI Dynamic. 
These findings suggest that progress in machine olfaction should move beyond stronger sequence encoders toward models that explain sensor responses through compositional and closure-consistent gas dynamics.

\section{Acknowledgments}
\paragraph{Funding.} This work was supported by BB/R019983/1, BB/Y513763/1, BB/S020969/1, EP/X013707/1, UKRI3606
% BB/R019983/1, BB/Y513763/1, BB/S020969/1, EP/X013707/1, and UKRI3606.

\bibliography{ms_adr_gnn_aaai2026}

\clearpage
\appendix
% Rename only the first section heading supplied by supplement_v2.tex.
\let\unmixnetOriginalSection\section
\renewcommand{\section}[1]{%
  \unmixnetOriginalSection{Supplementary Material}%
  \let\section\unmixnetOriginalSection
}
% Supplementary appendix for input into Revised_aaai2026_v2.tex
% Usage in main file: \clearpage\appendix\input{supplement_fixed}
% This file intentionally contains no \documentclass, preamble, \begin{document}, or \end{document}.

\section{Supplement}

This supplement gives the full construction of Experiment~2, the SmellNet-calibrated pseudo physical-state benchmark. 
The benchmark is designed as a \emph{closure-consistency diagnostic}. 
It does not claim that SmellNet contains independently measured concentration fields, Maxwell--Stefan fluxes, sensor-surface coverages, or chamber-resolved gas masses. 
Instead, it asks the following diagnostic question:
\begin{quote}
Can an inverse prediction be lifted into a simulator-consistent trajectory that satisfies the same transport--adsorption--readout closure assumed by UnMixNet?
\end{quote}
This distinction is important. 
Real SmellNet traces and recipe labels evaluate real-data recognition and mixture recovery. 
The pseudo physical-state benchmark evaluates whether latent trajectories are compatible with a frozen SmellNet-calibrated multi-physics simulator. 
Thus, the benchmark provides mechanistic evidence about closure consistency, not independently measured physical-state ground truth.

\paragraph{High-level construction.}
The benchmark is constructed in five stages:
\begin{enumerate}
    \item preprocess SmellNet sensor traces and extract pure-component dynamic fingerprints from training data;
    \item calibrate a constrained multi-physics simulator on training-set single-odor and mixture responses;
    \item freeze the calibrated simulator and generate pseudo-reference trajectories for concentration, flux, coverage, and mass;
    \item map baselines without explicit physical states into the same diagnostic state space using training-only projections;
    \item evaluate recipe-level prediction, trajectory error, Maxwell--Stefan residuals, and mass-conservation residuals.
\end{enumerate}

\section{Data, Notation, and Preprocessing}

\subsection{Observed SmellNet data}

For each sample $r$, the observed data are
\begin{equation}
\mathcal D = \{Y^{(r)},z^{(r)},a^{(r)},e^{(r)}\}_{r=1}^{R},
\end{equation}
where
\begin{equation}
Y^{(r)}=\{y_m^{(r)}(t_n)\}_{n=1,m=1}^{T,M}\in\mathbb R^{T\times M}
\end{equation}
is the multichannel MOX sensor response, $z^{(r)}\in\{0,1\}^{K}$ is the mixture support, $a^{(r)}\in\Delta^{K-1}$ is the recipe-level normalized ratio over the present odorants, and $e^{(r)}$ stores experimental conditions such as temperature, humidity, flow rate if available, chamber geometry, source protocol, and sensor layout.

The carrier-air component is indexed by $0$. 
Odorant components are indexed by $i\in\{1,\ldots,K\}$. 
Sensor channels are indexed by $m\in\{1,\ldots,M\}$, spatial cells by $v\in V_x$, and spatial edges by $e\in E_x$.

\subsection{Baseline correction and normalization}

For each sample and each sensor channel, a pre-release or early-stable baseline window of length $N_b$ is used to compute
\begin{equation}
 b_m^{(r)}=\frac{1}{N_b}\sum_{n=1}^{N_b} y_m^{(r)}(t_n).
\end{equation}
The relative baseline-corrected response is
\begin{equation}
 \widetilde y_m^{(r)}(t_n)=\frac{y_m^{(r)}(t_n)-b_m^{(r)}}{|b_m^{(r)}|+\epsilon_b}.
\end{equation}
If the raw sensor output is resistance, an alternative log-ratio response can be used:
\begin{equation}
 \widetilde y_m^{(r)}(t_n)=\log\frac{R_m^{(r)}(t_n)+\epsilon_R}{R_{m,0}^{(r)}+\epsilon_R}.
\end{equation}
All normalization statistics are estimated from the training split only and then applied to validation and test samples.

\subsection{Onset alignment}

For each sample, the response onset $t_0^{(r)}$ is estimated from the first time at which the derivative magnitude exceeds a baseline-noise threshold:
\begin{equation}
 t_0^{(r)}=\min\left\{t_n:\left\|\frac{d\widetilde Y^{(r)}}{dt}(t_n)\right\|_2>\kappa\sigma_{\mathrm{base}}^{(r)}\right\}.
\end{equation}
If derivative-based onset detection is unstable, $t_0^{(r)}$ can be replaced by the known source insertion time or by the first threshold crossing of the normalized response amplitude.

\subsection{Pure-component dynamic fingerprints}

Let
\begin{equation}
\mathcal R_i^{\mathrm{single}}=\{r:z_i^{(r)}=1,\ \|z^{(r)}\|_0=1,\ r\in\mathcal R_{\mathrm{train}}\}
\end{equation}
be the set of training samples containing only odorant $i$. 
After onset alignment, the robust pure-component fingerprint is
\begin{equation}
H_i(t_n,m)=\operatorname{median}_{r\in\mathcal R_i^{\mathrm{single}}}\widetilde y_m^{(r)}(t_n-t_0^{(r)}).
\end{equation}
The median is used instead of a mean to reduce the effect of transient spikes, imperfect recovery, and sensor drift. 
The vectorized fingerprint is
\begin{equation}
h_i=\operatorname{vec}(H_i)\in\mathbb R^{TM},\qquad H=[h_1,\ldots,h_K].
\end{equation}
The empirical mutual coherence used in the main paper is
\begin{equation}
 \mu(H)=\max_{i\neq j}\frac{|h_i^\top h_j|}{\|h_i\|_2\|h_j\|_2}.
\end{equation}

\section{Simulator State and Graph Discretization}

\subsection{State variables}

At simulator substep $n$, the pseudo physical state is
\begin{equation}
S^n=(N^n,x^n,J^n,\theta^n,\widehat y^n).
\end{equation}
Here
\begin{align}
N^n_{v,i} &\ge 0 &&\text{moles of species }i\text{ in spatial cell }v,\\
x^n_{v,i} &= \frac{N^n_{v,i}}{\sum_{j=0}^{K}N^n_{v,j}+\epsilon_N} &&\text{mole fraction},\\
J^n_{e,i} &&&\text{diffusive flux of species }i\text{ on edge }e,\\
\theta^n_{m,i} &&&\text{coverage of species }i\text{ on sensor }m,\\
\widehat y^n_m &&&\text{simulated sensor readout.}
\end{align}
The empty surface-site fraction is
\begin{equation}
 \theta^n_{m,0}=1-\sum_{i=1}^{K}\theta^n_{m,i}.
\end{equation}

\subsection{Graphs}

The simulator uses three coupled graphs.
\paragraph{Spatial graph.}
$G_x=(V_x,E_x)$ is a finite-volume discretization of the chamber. 
Each cell $v$ has volume $V_v$. 
Each edge $e=(u,v)$ has length $\ell_e$, interface area $A_e$, and orientation encoded by the incidence matrix $B\in\mathbb R^{|V_x|\times |E_x|}$:
\begin{equation}
B_{v,e}=\begin{cases}
-1,& e\text{ leaves }v,\\
+1,& e\text{ enters }v,\\
0,& \text{otherwise.}
\end{cases}
\end{equation}

\paragraph{Component graph.}
$G_c=(V_c,E_c)$ contains carrier air and odorant components. 
For each spatial edge $e$, a component Laplacian $L_e^c$ is constructed from Maxwell--Stefan pairwise diffusivities and edge mole fractions.

\paragraph{Sensor boundary graph.}
$G_s$ connects gas boundary cells to sensor-surface coverage states and sensor readout states. 
A sensor $m$ is associated with one or more neighboring gas cells $v(m)$.

\subsection{One-step closure operator}

A simulator time step is a composition of source injection, CFL-normalized advection, Maxwell--Stefan projection, conservative finite-volume update, adsorption, and readout:
\begin{equation}
S^{n+1}=
\mathcal R_{\Delta t}^{\mathrm{read}}
\circ
\mathcal B_{\Delta t}^{\mathrm{ads}}
\circ
\mathcal D_{\Delta t}^{\mathrm{MS}}
\circ
\mathcal A_{\Delta t}^{\mathrm{CFL}}
\circ
\mathcal I_{\Delta t}^{\mathrm{src}}(S^n).
\end{equation}
This operator composition is the pseudo simulator analog of the UnMixNet solver used in the main paper.

\section{Step-by-Step Pseudo Physical-State Simulation}

\subsection{Source-conditioned injection}

For sample $r$, odorant $i$, and cell $v$, the source term is
\begin{equation}
 q_{v,i}^{(r),n}=z_i^{(r)}a_i^{(r)}\alpha_i\rho_i(t_n;\psi_i,T,H)p_{\mathrm{src}}(v),
\end{equation}
where $\alpha_i$ is an emission scale, $\rho_i$ is a positive release profile, and $p_{\mathrm{src}}$ is a normalized source-cell distribution:
\begin{equation}
 p_{\mathrm{src}}(v)\ge0,
 \qquad
 \sum_{v\in V_x}p_{\mathrm{src}}(v)=1.
\end{equation}
A practical release profile is a normalized rise--decay curve:
\begin{equation}
\rho_i(t)=\frac{(1-e^{-t/\tau_i^{\mathrm{rise}}})e^{-t/\tau_i^{\mathrm{decay}}}\mathbf 1(t\ge0)}
{\int_0^{T_{\max}}(1-e^{-s/\tau_i^{\mathrm{rise}}})e^{-s/\tau_i^{\mathrm{decay}}}ds+\epsilon_\rho}.
\end{equation}
The post-injection state is
\begin{equation}
N_{v,i}^{n,+}=N_{v,i}^{n}+\Delta t\,q_{v,i}^{n}.
\end{equation}
The carrier-air background can be initialized as
\begin{equation}
 N_{v,0}^{0}=c_0V_v,
 \qquad
 N_{v,i}^{0}=0\quad (i\ge1).
\end{equation}

\subsection{CFL-normalized graph advection}

For a directed edge $u\rightarrow v$, define a nonnegative transfer probability
\begin{equation}
 P_{u\rightarrow v,i}^{n}=\frac{\Delta t\,A_{uv}}{V_u}\max(u_{uv}^{n},0),
\end{equation}
where $u_{uv}^{n}$ is the edge-normal advective velocity. 
If learned advection scores are used, they are normalized so that
\begin{equation}
P_{u\rightarrow v,i}^{n}\ge0,
\qquad
\sum_{v:(u,v)\in E_x}P_{u\rightarrow v,i}^{n}\le1.
\end{equation}
If the raw sampling interval is too large, the simulator uses $L$ substeps with $\Delta t_{\mathrm{sub}}=\Delta t/L$ until the CFL condition is satisfied.

The advection update is
\begin{equation}
N_{v,i}^{A}=\left(1-\sum_wP_{v\rightarrow w,i}^{n}\right)N_{v,i}^{n,+}
+\sum_uP_{u\rightarrow v,i}^{n}N_{u,i}^{n,+}.
\end{equation}
The updated mole fractions are
\begin{equation}
x_{v,i}^{A}=\frac{N_{v,i}^{A}}{\sum_{j=0}^{K}N_{v,j}^{A}+\epsilon_N}.
\end{equation}

\subsection{Maxwell--Stefan edge-wise flux projection}

For each spatial edge $e=(u,v)$, compute the mole-fraction gradient
\begin{equation}
 g_{e,i}^{n}=\frac{x_{v,i}^{A}-x_{u,i}^{A}}{\ell_e}
\end{equation}
and the edge-averaged mole fraction
\begin{equation}
 \bar x_{e,i}^{n}=\frac{x_{u,i}^{A}+x_{v,i}^{A}}{2}.
\end{equation}
Pairwise Maxwell--Stefan diffusivities are symmetric and positive:
\begin{equation}
D_{e,ij}=D_{e,ji}>0,
\qquad
D_{e,ij}=D_{\min}+\operatorname{softplus}(\eta_{e,ij}).
\end{equation}
The component interaction weights are
\begin{equation}
\gamma_{e,ij}^{n}=\frac{\bar x_{e,i}^{n}\bar x_{e,j}^{n}}{\bar c_e^nD_{e,ij}+\epsilon_D},\qquad i\neq j.
\end{equation}
They define the component Laplacian
\begin{equation}
(L_e^cw_e)_i=\sum_{j\neq i}\gamma_{e,ij}(w_{e,i}-w_{e,j}).
\end{equation}
The relative molar velocity $w_e$ is projected onto the zero-net-flux constraint subspace:
\begin{equation}
\bar x_e^\top w_e=0.
\end{equation}
The simplest KKT solve is
\begin{equation}
\begin{bmatrix}
L_e^c+\epsilon_KI&\bar x_e\\
\bar x_e^\top&0
\end{bmatrix}
\begin{bmatrix}
w_e\\
\lambda_e
\end{bmatrix}
=
\begin{bmatrix}
-g_e\\
0
\end{bmatrix}.
\end{equation}
The implementation used in the main model may use a weighted least-squares version with a flux prior:
\begin{equation}
w_e^{\mathrm{MS}}=
\arg\min_{w_e}
\|L_e^cw_e+g_e\|_{\Omega_e}^2+
\mu\|w_e-\widetilde w_e\|_2^2
\quad\text{s.t.}\quad
\bar x_e^\top w_e=0.
\end{equation}
This yields the KKT system
\begin{equation}
\begin{bmatrix}
(L_e^c)^\top\Omega_eL_e^c+\mu I&\bar x_e\\
\bar x_e^\top&0
\end{bmatrix}
\begin{bmatrix}
w_e^{\mathrm{MS}}\\
\lambda_e
\end{bmatrix}
=
\begin{bmatrix}
-(L_e^c)^\top\Omega_eg_e+\mu\widetilde w_e\\
0
\end{bmatrix}.
\end{equation}
The Maxwell--Stefan flux is
\begin{equation}
J_{e,i}^{\mathrm{MS}}=\bar c_e\bar x_{e,i}w_{e,i}^{\mathrm{MS}}.
\end{equation}

\subsection{Finite-volume diffusion update}

The edge fluxes are converted into cell-wise mole updates by the oriented incidence matrix:
\begin{equation}
N_{v,i}^{D}=N_{v,i}^{A}-\Delta t\sum_{e\in E_x}B_{v,e}A_eJ_{e,i}^{\mathrm{MS}}.
\end{equation}
When an explicit update could produce negative moles, substepping or a conservative flux limiter is applied. 
A sufficient limiter is
\begin{equation}
\Delta t\sum_{e:B_{v,e}=-1}A_e\max(J_{e,i}^{\mathrm{MS}},0)
\le \eta_{\mathrm{lim}}N_{v,i}^{A},
\qquad 0<\eta_{\mathrm{lim}}<1.
\end{equation}
Optional wall loss is added as an explicit sink:
\begin{equation}
N_{v,i}^{D}=N_{v,i}^{A}-\Delta t\sum_eB_{v,e}A_eJ_{e,i}^{\mathrm{MS}}-\Delta t\kappa_{v,i}^{\mathrm{wall}}N_{v,i}^{A}.
\end{equation}
For closed mass accounting, a wall reservoir can be recorded:
\begin{equation}
W_{v,i}^{n+1}=W_{v,i}^{n}+\Delta t\kappa_{v,i}^{\mathrm{wall}}N_{v,i}^{A}.
\end{equation}

\subsection{Competitive adsorption at sensor boundaries}

For sensor $m$, let $v(m)$ denote the neighboring gas cell and
\begin{equation}
 c_{m,i}^{n}=\frac{N_{v(m),i}^{D}}{V_{v(m)}}.
\end{equation}
The competitive Langmuir-type adsorption dynamics are
\begin{equation}
\dot\theta_{m,i}=k^{\mathrm{ads}}_{m,i}c_{m,i}(t)\theta_{m,0}-k^{\mathrm{des}}_{m,i}\theta_{m,i},
\qquad
\theta_{m,0}=1-\sum_{i=1}^{K}\theta_{m,i}.
\end{equation}
Define $\theta_m=(\theta_{m,0},\theta_{m,1},\ldots,\theta_{m,K})^\top$. 
The dynamics can be written as
\begin{equation}
\dot\theta_m=Q_m(c_m)\theta_m,
\end{equation}
where
\begin{equation}
(Q_m)_{i0}=k^{\mathrm{ads}}_{m,i}c_{m,i},
\qquad
(Q_m)_{0i}=k^{\mathrm{des}}_{m,i},
\end{equation}
and diagonal entries are chosen so that every column sums to zero:
\begin{equation}
(Q_m)_{jj}=-\sum_{\ell\neq j}(Q_m)_{\ell j}.
\end{equation}
The exact update is
\begin{equation}
\theta_m^{n+1}=\exp(\Delta tQ_m(c_m^n))\theta_m^n.
\end{equation}

\subsection{Nonlinear transduction and readout lag}

The coverage score is
\begin{equation}
s_m^n=\sum_{i=1}^{K}\beta_{m,i}\theta_{m,i}^{n}.
\end{equation}
A monotone transduction target is
\begin{equation}
r_m^n=b_m(T,H)+g_m(s_m^n,T,H),
\qquad
\frac{\partial g_m}{\partial s}\ge0.
\end{equation}
The readout lag is modeled by
\begin{equation}
\tau_m(T,H)\dot{\widehat y}_m=-\widehat y_m+r_m(t).
\end{equation}
Assuming $r_m(t)$ is constant inside a substep gives the exact update
\begin{equation}
\widehat y_m^{n+1}=\alpha_m\widehat y_m^n+(1-\alpha_m)r_m^n,
\qquad
\alpha_m=\exp(-\Delta t/\tau_m).
\end{equation}
The simulated response is
\begin{equation}
\widehat Y_{\Theta}^{(r)}=\{\widehat y_m^{(r)}(t_n)\}_{n,m}.
\end{equation}

\section{Simulator Calibration}

\subsection{Parameter set}

The simulator parameter set is
\begin{equation}
\Theta=\{\alpha_i,\psi_i,D_{ij},k^{\mathrm{ads}}_{m,i},k^{\mathrm{des}}_{m,i},\beta_{m,i},\tau_m,
\kappa_{v,i}^{\mathrm{wall}},u_e,b_m,g_m\}.
\end{equation}
The feasible set imposes physical constraints:
\begin{equation}
\Omega:
D_{ij}=D_{ji}>0,
\quad k^{\mathrm{ads}}_{m,i}\ge0,
\quad k^{\mathrm{des}}_{m,i}\ge0,
\quad \tau_m>0,
\quad \beta_{m,i}\ge0.
\end{equation}

\subsection{Calibration objective}

The calibrated simulator is
\begin{equation}
\Theta^\star=\arg\min_{\Theta\in\Omega}\mathcal L_{\mathrm{cal}}(\Theta),
\end{equation}
where
\begin{equation}
\mathcal L_{\mathrm{cal}}=
\lambda_Y\mathcal L_Y+
\lambda_H\mathcal L_H+
\lambda_{\mathrm{mix}}\mathcal L_{\mathrm{mix}}+
\lambda_{\mathrm{phys}}\mathcal L_{\mathrm{phys}}+
\lambda_{\mathrm{reg}}\mathcal R(\Theta).
\end{equation}
The reconstruction loss is
\begin{equation}
\mathcal L_Y=\sum_{r\in\mathcal R_{\mathrm{train}}}\|\widetilde Y^{(r)}-\widehat Y_{\Theta}^{(r)}\|_{\Sigma_Y^{-1}}^2.
\end{equation}
The pure-fingerprint matching loss is
\begin{equation}
\mathcal L_H=\sum_{i=1}^{K}\|H_i-\widehat H_{i,\Theta}\|_2^2.
\end{equation}
The mixture consistency loss is
\begin{equation}
\mathcal L_{\mathrm{mix}}=\sum_{r\in\mathcal R_{\mathrm{mix,train}}}
\|\widetilde Y^{(r)}-\widehat Y_{\Theta}(z^{(r)},a^{(r)},e^{(r)})\|_2^2.
\end{equation}
The physical residual term is
\begin{equation}
\mathcal L_{\mathrm{phys}}=
\sum_{r,n,e}\|\mathcal R_{\mathrm{MS}}(x^n,J^n,D)\|_2^2+
\sum_{r,n}\|\mathcal R_{\mathrm{mass}}(N^n,J^n,q^n,r^n)\|_2^2.
\end{equation}

\subsection{Calibration schedule}

Calibration is performed only on the training split.
\begin{enumerate}
    \item \textbf{Pure-component stage.} Fit release profiles, adsorption/desorption rates, transduction gains, and readout time constants using single-odor training samples.
    \item \textbf{Mixture stage.} Fit pairwise Maxwell--Stefan diffusivities and cross-component interaction parameters using mixture training samples.
    \item \textbf{Global fine-tuning stage.} Jointly fine-tune all parameters under positivity, symmetry, monotonicity, and conservation regularization.
    \item \textbf{Freeze.} Freeze $\Theta^\star$ before producing pseudo-reference trajectories for validation or test samples.
\end{enumerate}
Freezing the simulator prevents leakage: test labels are used only to define the known recipe for pseudo-reference generation, not to tune the simulator parameters or select hyperparameters.

\section{Pseudo-Reference State Generation}

For each sample $r$, the frozen simulator produces
\begin{equation}
S_{\star}^{(r),0:T}=F_{\Theta^\star}^{\mathrm{state}}(z^{(r)},a^{(r)},e^{(r)}).
\end{equation}
The following pseudo-reference trajectories are extracted.

\subsection{Boundary concentration}

For sensor $m$ and species $i$:
\begin{equation}
C_{m,i}^{\star}(t_n)=c_{v(m),i}^{\star}(t_n).
\end{equation}
A scalar trajectory for plotting is
\begin{equation}
C^{\star}(t_n)=\frac{1}{MK}\sum_{m=1}^{M}\sum_{i=1}^{K}C_{m,i}^{\star}(t_n).
\end{equation}

\subsection{Maxwell--Stefan flux}

The edge-wise flux reference is $J_{e,i}^{\star}(t_n)$. 
A scalar trajectory for visualization is
\begin{equation}
J^{\star}(t_n)=\frac{1}{|E_x|K}\sum_{e\in E_x}\sum_{i=1}^{K}|J_{e,i}^{\star}(t_n)|.
\end{equation}

\subsection{Sensor coverage}

The species-resolved coverage reference is $\theta_{m,i}^{\star}(t_n)$. 
For a scalar coverage trajectory:
\begin{equation}
\Theta^{\star}(t_n)=\frac{1}{MK}\sum_{m=1}^{M}\sum_{i=1}^{K}\theta_{m,i}^{\star}(t_n).
\end{equation}

\subsection{Total gas-phase mass}

The species-wise gas-phase mass is
\begin{equation}
M_i^{\star}(t_n)=\sum_{v\in V_x}N_{v,i}^{\star}(t_n),
\end{equation}
and the total odorant gas-phase mass is
\begin{equation}
M^{\star}(t_n)=\sum_{i=1}^{K}\sum_{v\in V_x}N_{v,i}^{\star}(t_n).
\end{equation}

\section{Projected Physical States for Baselines}

Some baselines, such as ADR-GNN, produce hidden states but not explicit concentration, flux, coverage, or mass. 
To compare them in the same diagnostic space, we fit training-only linear projections from hidden states to pseudo-reference states.
Let $H_{\mathrm{base}}^{(r),0:T}$ be a baseline hidden trajectory. 
For concentration, fit
\begin{equation}
P_C^{\star}=\arg\min_{P_C}\sum_{r\in\mathcal R_{\mathrm{train}}}
\|P_CH_{\mathrm{base}}^{(r),0:T}-C_{\star}^{(r),0:T}\|_2^2+
\alpha_C\|P_C\|_F^2.
\end{equation}
Similarly, fit $P_J^\star$, $P_\theta^\star$, and $P_M^\star$. 
At test time,
\begin{equation}
\widehat C_{\mathrm{base}}=P_C^\star H_{\mathrm{base}},\quad
\widehat J_{\mathrm{base}}=P_J^\star H_{\mathrm{base}},\quad
\widehat\theta_{\mathrm{base}}=P_\theta^\star H_{\mathrm{base}},\quad
\widehat M_{\mathrm{base}}=P_M^\star H_{\mathrm{base}}.
\end{equation}
This projection does not give the baseline additional test-time physical supervision. 
It only asks whether the hidden dynamics learned by the baseline can be linearly lifted into the simulator-consistent physical-state space.

\section{Evaluation Metrics}

\subsection{Recipe-level metrics}

The mixture-ratio MAE is
\begin{equation}
\mathrm{MAE}=\frac{1}{RK}\sum_{r=1}^{R}\sum_{i=1}^{K}|\widehat a_i^{(r)}-a_i^{(r)}|.
\end{equation}
Support prediction uses
\begin{equation}
\widehat z_i^{(r)}=\mathbf 1(\widehat a_i^{(r)}>\delta).
\end{equation}
Precision, recall, and F1 are then computed over the support indicators. 
Cosine similarity is
\begin{equation}
\mathrm{Cosine}=\frac{\widehat a^\top a}{\|\widehat a\|_2\|a\|_2+\epsilon}.
\end{equation}
TopK@0.1 follows the main paper's implementation and measures whether the dominant predicted components recover the recipe-level ratios within tolerance.

\subsection{Trajectory errors}

Concentration error:
\begin{equation}
\mathrm{RMSE}_{C}=\sqrt{\frac{1}{RTMK}\sum_{r,n,m,i}(\widehat C_{m,i}^{(r)}(t_n)-C_{m,i}^{\star,(r)}(t_n))^2}.
\end{equation}
Flux error:
\begin{equation}
\mathrm{RMSE}_{J}=\sqrt{\frac{1}{RT|E_x|K}\sum_{r,n,e,i}(\widehat J_{e,i}^{(r)}(t_n)-J_{e,i}^{\star,(r)}(t_n))^2}.
\end{equation}
Coverage error:
\begin{equation}
\mathrm{RMSE}_{\theta}=\sqrt{\frac{1}{RTMK}\sum_{r,n,m,i}(\widehat\theta_{m,i}^{(r)}(t_n)-\theta_{m,i}^{\star,(r)}(t_n))^2}.
\end{equation}
Mass error:
\begin{equation}
\mathrm{RMSE}_{M}=\sqrt{\frac{1}{RTK}\sum_{r,n,i}(\widehat M_i^{(r)}(t_n)-M_i^{\star,(r)}(t_n))^2}.
\end{equation}

\subsection{Maxwell--Stefan residual}

For a predicted state $(\widehat x,\widehat J)$, define
\begin{equation}
\mathcal R_{\mathrm{MS},e,i}^{n}=
-\frac{\widehat x_{v,i}^{n}-\widehat x_{u,i}^{n}}{\ell_e}
-
\sum_{j\neq i}\frac{
\bar x_{e,j}^{n}\widehat J_{e,i}^{n}-\bar x_{e,i}^{n}\widehat J_{e,j}^{n}
}{\bar c_e^nD_{ij}}.
\end{equation}
The normalized residual is
\begin{equation}
\mathrm{MSResidual}=\frac{1}{RT|E_x|K}\sum_{r,n,e,i}
\frac{|\mathcal R_{\mathrm{MS},e,i}^{(r),n}|}{\|g_e^{(r),n}\|_2+\epsilon}.
\end{equation}

\subsection{Mass-conservation residual}

The finite-volume balance with source and sinks is
\begin{equation}
N_{v,i}^{n+1}-N_{v,i}^{n}+\Delta t\sum_eB_{v,e}A_eJ_{e,i}^{n}-\Delta t q_{v,i}^{n}+\Delta t r_{v,i}^{\mathrm{wall},n}+\Delta t r_{v,i}^{\mathrm{ads},n}=0.
\end{equation}
Define
\begin{equation}
\mathcal R_{\mathrm{mass},v,i}^{n}=N_{v,i}^{n+1}-N_{v,i}^{n}+\Delta t\sum_eB_{v,e}A_eJ_{e,i}^{n}-\Delta t q_{v,i}^{n}+\Delta t r_{v,i}^{\mathrm{wall},n}+\Delta t r_{v,i}^{\mathrm{ads},n}.
\end{equation}
The normalized mass error is
\begin{equation}
\mathrm{MassError}=\frac{\sum_{r,n,v,i}|\mathcal R_{\mathrm{mass},v,i}^{(r),n}|}{\sum_{r,n,v,i}|N_{v,i}^{(r),n}|+\epsilon}.
\end{equation}
The phase-space diagnostic in the main paper plots $(\mathrm{MSResidual},\mathrm{MassError})$ for each sample.

\section{Algorithmic Summary}

\begin{algorithm}[t]
\caption{Construction of the SmellNet-Calibrated Pseudo Physical-State Benchmark}
\label{alg:pseudo_benchmark}
\begin{algorithmic}[1]
\STATE \textbf{Input:} training SmellNet traces $Y$, recipe labels $(z,a)$, environmental metadata $e$, graph template $G_x,G_c,G_s$.
\STATE Baseline-correct and normalize all sensor traces using training statistics.
\STATE Estimate onset times and construct pure-component fingerprints $H_i$ from single-odor training samples.
\STATE Initialize simulator parameters $\Theta$ subject to positivity, symmetry, monotonicity, and conservation constraints.
\STATE Fit release, adsorption, desorption, transduction, and readout parameters on pure-component training samples.
\STATE Fit Maxwell--Stefan diffusivities and cross-component coupling parameters on mixture training samples.
\STATE Jointly fine-tune $\Theta$ with reconstruction, fingerprint, mixture, and physical-residual losses.
\STATE Freeze the calibrated simulator $\Theta^\star$.
\FOR{each validation or test sample $r$}
    \STATE Run the frozen simulator with $(z^{(r)},a^{(r)},e^{(r)})$.
    \STATE Store pseudo-reference trajectories $C^\star,J^\star,\theta^\star,M^\star$.
\ENDFOR
\STATE Fit training-only projections from baseline hidden states to pseudo-reference states if a baseline lacks explicit states.
\STATE Evaluate recipe metrics, trajectory RMSEs, Maxwell--Stefan residuals, and mass-conservation residuals.
\STATE \textbf{return} pseudo physical-state benchmark and diagnostic metrics.
\end{algorithmic}
\end{algorithm}

\section{Mathematical Validity of the Benchmark}

\subsection{Finite-volume mass conservation}

\begin{proposition}[Finite-volume conservation]
For a closed spatial graph with no source or sink, the finite-volume update
\begin{equation}
N_i^{n+1}=N_i^n-\Delta t\,BAJ_i^n
\end{equation}
preserves total species-wise mass.
\end{proposition}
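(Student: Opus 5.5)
The plan is to sum the update over all cells and use the column structure of the oriented incidence matrix $B$. Write $\mathbf 1\in\mathbb R^{|V_x|}$ for the all-ones vector. For each species $i$, the total gas-phase mass is $M_i^n=\mathbf 1^\top N_i^n=\sum_{v\in V_x}N^n_{v,i}$. Left-multiplying the update by $\mathbf 1^\top$ gives
\begin{equation*}
M_i^{n+1}=M_i^n-\Delta t\,(\mathbf 1^\top B)\,A\,J_i^n .
\end{equation*}
So it suffices to show that $\mathbf 1^\top B=0$. Here $A=\mathrm{diag}(A_e)$ is diagonal, so the product reduces to a weighted combination of the columns of $B$.

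The key step is the column-sum identity. By the definition of $B$ given above, every edge $e=(u,v)$ leaves exactly one cell $u$ and enters exactly one cell $v$. Column $e$ therefore has exactly one entry $-1$ and one entry $+1$, and all other entries are zero. Hence $\sum_v B_{v,e}=0$ for every $e$, and
\begin{equation*}
\mathbf 1^\top BAJ_i^n=\sum_{e}\Big(\sum_v B_{v,e}\Big)A_eJ^n_{e,i}=0.
\end{equation*}
This is the graph analogue of the divergence theorem. Flux leaving one cell through an interior interface is gained by the neighbouring cell, so everything cancels edge by edge. No property of $J_i^n$ is needed, and in particular the Maxwell--Stefan projection of Theorem~2 plays no role.

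The only step needing care is the meaning of ``closed''. I would make explicit that closedness means every edge in $E_x$ has both endpoints in $V_x$. There are no outlet or boundary half-edges whose columns would contain a single nonzero entry, there is no source term $q$, and the wall and adsorption sinks are zero (or are booked into reservoirs). Under that reading the argument is complete. The identity holds for every species $i$ separately and for all $\Delta t$.

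I would also remark that positivity is not part of this statement. Positivity requires the flux limiter introduced above, and this separates the present result from the advection Proposition in the main text.
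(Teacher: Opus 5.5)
Your proof is correct and follows the paper's own argument: left-multiply by $\mathbf 1^\top$ and use $\mathbf 1^\top B=0$, which holds because each column of the oriented incidence matrix has exactly one $-1$ and one $+1$. Your explicit reading of ``closed'' (no boundary half-edges, no sources, no sinks) spells out what the paper leaves implicit; note also that $A$ need not be diagonal, since $(\mathbf 1^\top B)AJ_i^n=0$ for any compatible $A$.
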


\paragraph{Proof.}
Left-multiply by the all-ones vector $\mathbf 1^\top$:
\begin{equation}
\mathbf 1^\top N_i^{n+1}=\mathbf 1^\top N_i^n-\Delta t\,\mathbf 1^\top BAJ_i^n.
\end{equation}
For an incidence matrix, each edge has exactly one entering and one leaving endpoint, so
\begin{equation}
\mathbf 1^\top B=0.
\end{equation}
Therefore
\begin{equation}
\mathbf 1^\top N_i^{n+1}=\mathbf 1^\top N_i^n.
\end{equation}
If sources and sinks are present, the same derivation gives
\begin{equation}
\mathbf 1^\top N_i^{n+1}-\mathbf 1^\top N_i^n=\Delta t\,\mathbf 1^\top q_i^n-\Delta t\,\mathbf 1^\top r_i^n,
\end{equation}
so all mass changes are explicitly accounted for by known source or sink terms.
\hfill$\square$

\subsection{CFL advection preserves positivity and mass}

\begin{proposition}[CFL positivity and conservation]
Assume $P_{u\rightarrow v,i}\ge0$ and $\sum_vP_{u\rightarrow v,i}\le1$. 
If $N_{v,i}^{n,+}\ge0$ for all $v$, then $N_{v,i}^{A}\ge0$ for all $v$. 
Moreover, for a closed graph,
\begin{equation}
\sum_vN_{v,i}^{A}=\sum_vN_{v,i}^{n,+}.
\end{equation}
\end{proposition}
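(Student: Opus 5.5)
The plan is to argue directly from the advection update
\[
N^A_{v,i}=\Big(1-\sum_w P^n_{v\to w,i}\Big)N^{n,+}_{v,i}+\sum_u P^n_{u\to v,i}N^{n,+}_{u,i},
\]
treating each species $i$ separately. The update never mixes species, so it is enough to fix $i$ and view it as a linear map $N^{n,+}_{\cdot,i}\mapsto N^A_{\cdot,i}$ on the cells $V_x$.

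\textbf{Positivity.} Since $\sum_w P^n_{v\to w,i}\le 1$, the self-retention coefficient $1-\sum_w P^n_{v\to w,i}$ is nonnegative. Each incoming coefficient $P^n_{u\to v,i}$ is nonnegative by hypothesis, and so is every $N^{n,+}_{u,i}$. Hence $N^A_{v,i}$ is a sum of products of nonnegative numbers, so $N^A_{v,i}\ge 0$.

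\textbf{Mass.} Equivalently, the update is $N^A_{\cdot,i}=M_iN^{n,+}_{\cdot,i}$ with matrix entries
\[
(M_i)_{vv}=1-\sum_w P^n_{v\to w,i},\qquad (M_i)_{vu}=P^n_{u\to v,i}\quad (u\neq v).
\]
The column of $M_i$ indexed by $u$ then sums to
\[
1-\sum_w P^n_{u\to w,i}+\sum_{v}P^n_{u\to v,i}=1,
\]
so $M_i$ is column-stochastic and $\mathbf 1^\top M_i=\mathbf 1^\top$. In direct terms, summing the update over $v$ and swapping the order of the double sum $\sum_v\sum_u P^n_{u\to v,i}N^{n,+}_{u,i}=\sum_u N^{n,+}_{u,i}\sum_v P^n_{u\to v,i}$ shows that this inflow term cancels the outflow term $\sum_v N^{n,+}_{v,i}\sum_w P^n_{v\to w,i}$ once the dummy indices are relabeled. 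What remains is $\sum_v N^A_{v,i}=\sum_v N^{n,+}_{v,i}$. This is the same mechanism as the $\mathbf 1^\top B=0$ argument in the preceding finite-volume proposition.

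\textbf{Main obstacle.} The one point needing care is what ``closed graph'' means. The cancellation requires that every transfer $P^n_{u\to v,i}$ leaving a cell $u$ lands on some cell $v\in V_x$, i.e., there are no outlet edges whose mass is removed rather than received. I will make this explicit: the outflow from $u$ is summed over the same out-neighborhood $\mathcal N(u)$ that receives it, and no edges point outside $V_x$. If outlets are present, the same computation yields the identity minus the outlet outflow, which is then an explicitly accounted sink.
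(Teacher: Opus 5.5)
Your proof is correct and follows the paper's argument: positivity comes from the update being a nonnegative combination of nonnegative mole amounts, and you rightly make explicit that the self-retention coefficient is nonnegative because $\sum_w P^n_{v\to w,i}\le 1$. Mass conservation comes from summing over $v$ and exchanging the double sum so that inflow and outflow cancel; your column-stochastic reformulation and your reading of ``closed graph'' as every outgoing transfer landing in $V_x$ are cleaner packagings of that same computation.
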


\paragraph{Proof.}
The update
\begin{equation}
N_{v,i}^{A}=\left(1-\sum_wP_{v\rightarrow w,i}\right)N_{v,i}^{n,+}+\sum_uP_{u\rightarrow v,i}N_{u,i}^{n,+}
\end{equation}
is a nonnegative combination of nonnegative mole amounts because all coefficients are nonnegative. 
Thus $N_{v,i}^{A}\ge0$.
Summing over $v$ gives
\begin{align}
\sum_vN_{v,i}^{A}
&=\sum_v\left(1-\sum_wP_{v\rightarrow w,i}\right)N_{v,i}^{n,+}
+\sum_v\sum_uP_{u\rightarrow v,i}N_{u,i}^{n,+}\\
&=\sum_vN_{v,i}^{n,+}-\sum_v\sum_wP_{v\rightarrow w,i}N_{v,i}^{n,+}
+\sum_u\sum_vP_{u\rightarrow v,i}N_{u,i}^{n,+}\\
&=\sum_vN_{v,i}^{n,+}.
\end{align}
The outgoing and incoming transport sums cancel exactly.
\hfill$\square$

\subsection{Well-posed Maxwell--Stefan projection}

\begin{theorem}[Well-posed constrained flux projection]
Assume $\bar x_{e,i}>0$, $D_{e,ij}=D_{e,ji}>0$, and the component graph on edge $e$ is connected. 
Then $L_e^c$ is positive semidefinite with nullspace $\operatorname{span}\{\mathbf 1\}$. 
On the constrained subspace $\{w:\bar x_e^\top w=0\}$, the quadratic form is positive definite. 
Therefore the constrained Maxwell--Stefan projection has a unique solution.
\end{theorem}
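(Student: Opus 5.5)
The plan is to read $L_e^c$ as a weighted graph Laplacian on the component graph and then handle the saddle-point system by a nullspace argument. The first step uses the pair symmetry $\gamma_{e,ij}=\gamma_{e,ji}$. This holds because $\bar x_{e,i}\bar x_{e,j}$ is symmetric and $D_{e,ij}=D_{e,ji}$. With it, I will verify the quadratic-form identity
\begin{equation}
w^\top L_e^c w=\sum_{i<j}\gamma_{e,ij}\,(w_i-w_j)^2 .
\end{equation}
It follows by symmetrizing $\sum_i w_i\sum_{j\neq i}\gamma_{e,ij}(w_i-w_j)$. The identity gives symmetry and positive semidefiniteness at once, since every $\gamma_{e,ij}\ge 0$.

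Next I will identify the nullspace. Because $\bar x_{e,i}>0$ and $D_{e,ij}>0$, every weight on an edge of the component graph is strictly positive. So $w^\top L_e^c w=0$ forces $w_i=w_j$ across every edge. Connectivity then propagates this equality to all components, giving $w\in\operatorname{span}\{\mathbf 1\}$. Conversely, $L_e^c\mathbf 1=0$ trivially. For a positive semidefinite matrix, $Lw=0$ is equivalent to $w^\top Lw=0$, so $\ker L_e^c=\operatorname{span}\{\mathbf 1\}$.

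For definiteness on $W=\{w:\bar x_e^\top w=0\}$, I use $\bar x_e^\top\mathbf 1=\sum_i\bar x_{e,i}$. This sum equals $1$, since it is the average of two mole-fraction vectors; positivity alone would also suffice, since it gives a nonzero sum. So $\mathbf 1\notin W$, and $W\cap\ker L_e^c=\{0\}$. Hence $w^\top L_e^c w>0$ for every nonzero $w\in W$.

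Uniqueness then follows from a standard saddle-point argument. Suppose $(w,\lambda)$ solves the homogeneous system, that is, $L_e^cw+\lambda\bar x_e=0$ and $\bar x_e^\top w=0$. Multiplying the first equation by $w^\top$ gives $w^\top L_e^cw=0$ with $w\in W$, so $w=0$. Then $\lambda\bar x_e=0$ with $\bar x_e\neq0$, so $\lambda=0$. The KKT matrix is therefore nonsingular, and the constrained projection has a unique solution.

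The same argument covers the regularized variants. In the least-squares form, $(L_e^c)^\top\Omega_eL_e^c+\mu I$ is positive definite whenever $\mu>0$ or $\Omega_e\succ0$, because $L_e^c w=0$ on $W$ forces $w=0$. The $\epsilon_K I$ shift in the simple KKT solve is handled the same way.

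The main obstacle is mild. It is making the nullspace step airtight: connectivity must refer to the graph of strictly positive weights $\gamma_{e,ij}$. In the implementation, $\gamma_{e,ij}$ carries an $\epsilon$ in the denominator, but its positivity is still guaranteed by $\bar x_{e,i}>0$. I would also note in one line that the $\epsilon$ terms do not affect symmetry.
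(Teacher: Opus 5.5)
Your proposal is correct and follows essentially the same route as the paper. Both use the sum-of-squares identity $w^\top L_e^c w=\sum_{i<j}\gamma_{e,ij}(w_i-w_j)^2$, use connectivity to force the null directions into $\operatorname{span}\{\mathbf 1\}$, and use $\bar x_e^\top\mathbf 1\neq 0$ to exclude them from $\{w:\bar x_e^\top w=0\}$. Your homogeneous-system argument for nonsingularity of the KKT matrix spells out the step the paper compresses into ``which implies uniqueness of the constrained minimizer.'' One small precision on your regularized remark: when $\mu=0$ and $\Omega_e\succ0$, the block $(L_e^c)^\top\Omega_eL_e^c$ is positive definite only on that constraint subspace, not globally. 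That restricted definiteness is exactly what your KKT argument needs, so the conclusion stands.
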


\paragraph{Proof.}
For any vector $w$,
\begin{equation}
w^\top L_e^cw=\sum_{i<j}\gamma_{e,ij}(w_i-w_j)^2\ge0,
\end{equation}
because $\gamma_{e,ij}>0$ on connected component-graph edges. 
Thus $L_e^c$ is positive semidefinite. 
The quadratic form is zero if and only if $w_i=w_j$ for every connected pair $(i,j)$. 
Because the component graph is connected, this implies $w=c\mathbf 1$. 
The constraint $\bar x_e^\top w=0$ then gives
\begin{equation}
0=\bar x_e^\top(c\mathbf 1)=c\sum_i\bar x_{e,i}=c.
\end{equation}
Hence the only vector in both the nullspace and the constraint subspace is $0$. 
The restricted quadratic form is therefore positive definite, which implies uniqueness of the constrained minimizer.
\hfill$\square$

\begin{proposition}[Zero-net diffusive flux]
The projected flux $J_{e,i}^{\mathrm{MS}}=\bar c_e\bar x_{e,i}w_{e,i}^{\mathrm{MS}}$ satisfies
\begin{equation}
\sum_{i=0}^{K}J_{e,i}^{\mathrm{MS}}=0.
\end{equation}
\end{proposition}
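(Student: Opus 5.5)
The plan is a direct computation: factor the edge-wise scalar $\bar c_e$ out of the sum and then use the equality constraint of the KKT projection. Concretely,
\begin{equation}
\sum_{i=0}^{K}J_{e,i}^{\mathrm{MS}}=\bar c_e\sum_{i=0}^{K}\bar x_{e,i}w_{e,i}^{\mathrm{MS}}=\bar c_e\,\bar x_e^\top w_e^{\mathrm{MS}},
\end{equation}
so the claim reduces to showing that the projected velocity satisfies $\bar x_e^\top w_e^{\mathrm{MS}}=0$.

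That identity is the second block row of every KKT system used for the projection. For the simplest solve with the $\epsilon_K I$ shift, the bottom row reads $\bar x_e^\top w_e+0\cdot\lambda_e=0$. The same holds for the weighted least-squares version with the flux prior $\widetilde w_e$, whose bottom row is also $\bar x_e^\top w_e^{\mathrm{MS}}=0$. Hence any solution of either system satisfies the constraint exactly, independently of $L_e^c$, $\Omega_e$, $\mu$, or $g_e$.

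The only genuine point to check is that a solution exists, so that $w_e^{\mathrm{MS}}$ is well defined. I would invoke the preceding theorem, Well-posed constrained flux projection, for the unregularized case. For the regularized systems, the $(1,1)$ block $(L_e^c)^\top\Omega_eL_e^c+\mu I$, or $L_e^c+\epsilon_K I$, is positive definite when $\mu>0$, $\epsilon_K>0$, and $\Omega_e\succeq0$. Together with $\bar x_e\neq0$ (guaranteed since $\sum_i\bar x_{e,i}=1$), this makes the saddle-point matrix nonsingular by the standard Schur-complement argument. The Schur complement is $-\bar x_e^\top M^{-1}\bar x_e<0$, where $M$ denotes the $(1,1)$ block.

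I expect no real obstacle; the proof is essentially one line once existence is secured. I would add one remark. The constraint is linear, so in the implementation it holds up to solver round-off. The $\epsilon$ in the Laplacian weights $\gamma_{e,ij}$ does not affect the conclusion, because it enters only the $(1,1)$ block. The constraint row is unchanged. It follows that $\sum_iJ^{\mathrm{MS}}_{e,i}=0$ exactly for every edge $e$.
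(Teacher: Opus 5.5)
Your proof is correct and follows the paper's argument: you factor out the edge scalar $\bar c_e$ and use the KKT equality row $\bar x_e^\top w_e^{\mathrm{MS}}=0$. Your Schur-complement check that the regularized saddle-point system is nonsingular, so that $w_e^{\mathrm{MS}}$ is well defined, is a valid addition that the paper's proof takes for granted.
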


\paragraph{Proof.}
The KKT constraint enforces $\bar x_e^\top w_e^{\mathrm{MS}}=0$. 
Thus
\begin{equation}
\sum_iJ_{e,i}^{\mathrm{MS}}=\bar c_e\sum_i\bar x_{e,i}w_{e,i}^{\mathrm{MS}}=\bar c_e\bar x_e^\top w_e^{\mathrm{MS}}=0.
\end{equation}
\hfill$\square$

\subsection{Dissipativity of the component Laplacian}

The Maxwell--Stefan component Laplacian satisfies
\begin{equation}
w^\top L_e^cw=\sum_{i<j}\frac{\bar x_{e,i}\bar x_{e,j}}{\bar c_eD_{e,ij}}(w_i-w_j)^2\ge0.
\end{equation}
This is a discrete frictional dissipation form. 
It encodes reciprocal component coupling because $D_{ij}=D_{ji}$, and it prevents the edge message from acting as an arbitrary antisymmetric hidden update.

\subsection{Adsorption preserves the coverage simplex}

\begin{proposition}[Coverage simplex preservation]
Let $\dot\theta_m=Q_m(c_m)\theta_m$, where $Q_m$ has nonnegative off-diagonal entries and zero column sums. 
If $\theta_m(0)\ge0$ and $\mathbf 1^\top\theta_m(0)=1$, then
\begin{equation}
\theta_m(t)\ge0,
\qquad
\mathbf 1^\top\theta_m(t)=1
\end{equation}
for all $t\ge0$.
\end{proposition}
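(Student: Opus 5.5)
We prove the two claims separately: conservation of total coverage by a column-sum argument, and nonnegativity by the standard Metzler-matrix argument for the generator $Q_m$. Throughout we assume $c_m$ is constant over the step, which matches the exact update $\exp(\Delta tQ_m(c_m^n))$; the time-varying case is treated at the end.

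\emph{Conservation.} Zero column sums mean $\mathbf 1^\top Q_m=0$. Then
\begin{equation}
\frac{d}{dt}\mathbf 1^\top\theta_m=\mathbf 1^\top Q_m\theta_m=0,
\end{equation}
so $\mathbf 1^\top\theta_m(t)=\mathbf 1^\top\theta_m(0)=1$ for all $t\ge0$. Equivalently, $\mathbf 1^\top\exp(tQ_m)=\mathbf 1^\top$, which follows by expanding the exponential as a power series.

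\emph{Positivity.} Choose $\alpha\ge\max_j|(Q_m)_{jj}|$ and write $Q_m=(Q_m+\alpha I)-\alpha I$. The matrix $Q_m+\alpha I$ is entrywise nonnegative: its off-diagonal entries are those of $Q_m$, and its diagonal entries are $\alpha+(Q_m)_{jj}\ge0$. Since $\alpha I$ commutes with every matrix,
\begin{equation}
\exp(tQ_m)=e^{-\alpha t}\exp\bigl(t(Q_m+\alpha I)\bigr).
\end{equation}
The power series of $\exp(t(Q_m+\alpha I))$ has only nonnegative terms for $t\ge0$, so $\exp(tQ_m)$ is entrywise nonnegative. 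Hence $\theta_m(t)=\exp(tQ_m)\theta_m(0)\ge0$. Combined with the conservation identity, $\exp(tQ_m)$ is column-stochastic and maps the simplex into itself. This gives the step-wise statement used for $\theta_m^{n+1}$; the update keeps coverages admissible provided $c^n_{m,i}\ge0$, which the nonnegative mole states supply.

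\emph{Main obstacle: time-varying $c_m(t)$.} The argument above fails when $c_m$ varies in time, because the matrices $Q_m(c_m(t))$ at different times need not commute, so there is no matrix-exponential formula for the solution. The conservation argument still applies pointwise in $t$. For positivity, the plan is a boundary argument. Suppose some component reaches zero first at time $t^\ast$, with $\theta_{m,j}(t^\ast)=0$ and all other components nonnegative. Then
\begin{equation}
\dot\theta_{m,j}(t^\ast)=\sum_{\ell\ne j}(Q_m)_{j\ell}\theta_{m,\ell}(t^\ast)\ge0,
\end{equation}
since the off-diagonal entries of $Q_m$ are nonnegative. A nonnegative derivative at the boundary does not by itself rule out crossing into negative values, so we make it rigorous with the integrating factor $e^{\alpha t}$, where $\alpha$ bounds $\sup_t|(Q_m(c_m(t)))_{jj}|$ and we assume $c_m$ is bounded. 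Writing
\begin{equation}
\frac{d}{dt}\bigl(e^{\alpha t}\theta_m\bigr)=\bigl(Q_m+\alpha I\bigr)e^{\alpha t}\theta_m,
\end{equation}
the right-hand side has a nonnegative coefficient matrix. Its Picard iterates started from $\theta_m(0)\ge0$ therefore all stay nonnegative, and so does their limit, which is $e^{\alpha t}\theta_m(t)$.
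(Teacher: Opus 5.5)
Your proof is correct. For frozen coefficients it follows the same two-part route as the paper. The paper writes $\theta_m(t)=\exp(tQ_m)\theta_m(0)$, states without proof that the exponential of a Markov generator is nonnegative, and gets conservation from $\frac{d}{dt}\mathbf 1^\top\theta_m=\mathbf 1^\top Q_m\theta_m=0$.

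You go beyond the paper in two ways. First, you actually prove the nonnegativity claim, using $Q_m+\alpha I\ge0$ and $\exp(tQ_m)=e^{-\alpha t}\exp(t(Q_m+\alpha I))$, which the paper only asserts. Second, and more substantively, the paper's formula $\theta_m(t)=\exp(tQ_m)\theta_m(0)$ holds only when $Q_m$ is constant, i.e.\ when $c_m$ is frozen. That matches the implemented step $\exp(\Delta t\,Q_m(c_m^n))$, but not literally the continuous adsorption ODE, where $c_i(x_m,t)$ varies in time.

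Your integrating-factor argument covers that case: the Picard iterates of $\frac{d}{dt}(e^{\alpha t}\theta_m)=(Q_m(c_m(t))+\alpha I)e^{\alpha t}\theta_m$ all stay nonnegative. The price is a few assumptions. You need $c_m$ bounded (on each finite interval, with $\alpha$ allowed to depend on the interval) and measurable, so that the Picard iteration converges. You also need $c_m(t)\ge0$ for every $t$, so that the off-diagonal entries stay nonnegative throughout. You were right to set aside the first-hitting-time heuristic, since a nonnegative derivative at the boundary does not by itself rule out crossing.

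One small caveat concerns your closing remark. In the full solver, $c^n_{m,i}\ge0$ does not follow automatically from the explicit finite-volume diffusion step. It relies on the substepping or flux limiter the paper mentions. This lies outside the proposition, whose hypothesis already grants nonnegative off-diagonal entries of $Q_m$.
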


\paragraph{Proof.}
Since $Q_m$ has nonnegative off-diagonal entries, it is a Markov generator under the column-vector convention. 
The exact solution is
\begin{equation}
\theta_m(t)=\exp(tQ_m)\theta_m(0).
\end{equation}
The matrix exponential of a Markov generator is nonnegative. 
Furthermore,
\begin{equation}
\frac{d}{dt}\mathbf 1^\top\theta_m(t)=\mathbf 1^\top Q_m\theta_m(t)=0
\end{equation}
because every column of $Q_m$ sums to zero. 
Therefore the simplex is invariant.
\hfill$\square$

\subsection{Readout stability}

\begin{proposition}[Stable readout update]
Assume $\tau_m>0$ and define $\alpha_m=e^{-\Delta t/\tau_m}\in(0,1)$. 
The update
\begin{equation}
\widehat y_m^{n+1}=\alpha_m\widehat y_m^n+(1-\alpha_m)r_m^n
\end{equation}
is stable and contractive with respect to the initial readout state.
\end{proposition}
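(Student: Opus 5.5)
The plan is to treat the update as a linear, time-invariant affine recursion in $\widehat y_m$ and solve it explicitly. First, check that $\tau_m>0$ and $\Delta t>0$ give $\Delta t/\tau_m\in(0,\infty)$, so $\alpha_m=e^{-\Delta t/\tau_m}\in(0,1)$. Then both weights $\alpha_m$ and $1-\alpha_m$ are strictly positive and sum to one. Hence each step produces a convex combination of the previous readout and the transduction target $r_m^n$. It follows directly that $\widehat y_m^{n+1}$ lies between $\widehat y_m^n$ and $r_m^n$. Inductively, $\widehat y_m^{n}$ lies in the interval $[\min(\widehat y_m^0,\min_k r_m^k),\,\max(\widehat y_m^0,\max_k r_m^k)]$. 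This gives bounded-input bounded-output stability: if $|r_m^k|\le R$ for all $k$, then $|\widehat y_m^n|\le\max(|\widehat y_m^0|,R)$.

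For contractivity with respect to the initial state, I would consider two trajectories $\widehat y_m^n$ and $\widehat y_m'^n$ driven by the same target sequence $\{r_m^k\}$ but started from different initial values. Subtracting the two updates cancels the $(1-\alpha_m)r_m^n$ term and gives $\widehat y_m^{n+1}-\widehat y_m'^{n+1}=\alpha_m(\widehat y_m^n-\widehat y_m'^n)$. Therefore
\begin{equation*}
|\widehat y_m^n-\widehat y_m'^n|=\alpha_m^n|\widehat y_m^0-\widehat y_m'^0|,
\end{equation*}
so the map from initial state to state is a strict contraction with factor $\alpha_m<1$, and the dependence on the initial condition decays geometrically. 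Equivalently, unrolling the recursion gives the closed form
\begin{equation*}
\widehat y_m^n=\alpha_m^n\widehat y_m^0+(1-\alpha_m)\sum_{k=0}^{n-1}\alpha_m^{n-1-k}r_m^k,
\end{equation*}
which I would state to make both facts transparent. The geometric weights sum to $1-\alpha_m^n\le 1$, which recovers the BIBO bound.

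The one point requiring care is what ``the same inputs'' means. In the full solver, $r_m^n$ depends on $\theta_m^n$, which is driven by upstream states rather than by $\widehat y_m$. The readout never feeds back into the adsorption or transport blocks in the composition $\mathcal R\circ\mathcal B\circ\cdots$. So, conditional on the upstream trajectory, $r_m^n$ is an exogenous sequence, and the contraction argument applies verbatim. I would remark that, by the coverage simplex preservation proposition, $s_m^n$ is bounded by $\max_i\beta_{m,i}$. Hence $r_m^n$ is bounded whenever $g_m$ and $b_m$ are continuous on that compact range, which supplies the boundedness hypothesis for the BIBO statement. Monotonicity of $g_m$ is not needed for stability. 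I expect this decoupling observation, rather than any computation, to be the main thing to make explicit.
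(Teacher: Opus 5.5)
Your proposal is correct and follows essentially the same route as the paper. It uses the convex-combination/induction bound $|\widehat y_m^n|\le\max(|\widehat y_m^0|,R_m)$ for bounded targets, and then subtracts two trajectories driven by the same $r_m^n$ to obtain the contraction factor $\alpha_m$. Your closed-form unrolling and your remark that $r_m^n$ is exogenous (the readout never feeds back upstream) are clarifications the paper leaves implicit, but they do not change the argument.
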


\paragraph{Proof.}
If $|r_m^n|\le R_m$, then
\begin{equation}
|\widehat y_m^{n+1}|\le \alpha_m|\widehat y_m^n|+(1-\alpha_m)R_m,
\end{equation}
so by induction
\begin{equation}
|\widehat y_m^n|\le \max(|\widehat y_m^0|,R_m).
\end{equation}
For two trajectories with the same input $r_m^n$ but different initial states,
\begin{equation}
|\widehat y_m^{n+1}-\bar y_m^{n+1}|=\alpha_m|\widehat y_m^n-\bar y_m^n|.
\end{equation}
Since $0<\alpha_m<1$, the update is contractive.
\hfill$\square$

\subsection{Consistency with the continuous forward model}

Assume the chamber state is smooth. 
For an edge $e=(u,v)$ with length $\ell_e$, the discrete gradient satisfies
\begin{equation}
\frac{x_{v,i}-x_{u,i}}{\ell_e}=\nabla x_i\cdot n_e+O(\ell_e).
\end{equation}
The explicit finite-volume time step satisfies
\begin{equation}
\frac{N^{n+1}-N^n}{\Delta t}=\partial_tN(t_n)+O(\Delta t).
\end{equation}
Therefore, as $\max_e\ell_e\rightarrow0$ and $\Delta t\rightarrow0$ under the CFL condition, the pseudo simulator is a first-order consistent discretization of the transport--adsorption--readout closure used in the main paper. 
The benchmark does not require high-fidelity computational fluid dynamics; it requires that the diagnostic states be generated by a transparent, constrained, and reproducible discretization of the assumed closure model.

\section{Why the Benchmark Is Scientifically Rigorous}

The pseudo physical-state benchmark is scientifically rigorous in the following limited but important sense.
\begin{enumerate}
    \item \textbf{Training-calibrated rather than arbitrary.} The simulator is calibrated on real SmellNet training responses and pure-component fingerprints.
    \item \textbf{Constrained rather than unconstrained.} State transitions obey nonnegativity, finite-volume conservation, Maxwell--Stefan zero-net-flux constraints, adsorption simplex preservation, and stable readout dynamics.
    \item \textbf{Frozen before evaluation.} Once calibrated, the simulator is frozen and used consistently across methods and splits.
    \item \textbf{Diagnostic rather than overclaimed.} The benchmark tests compatibility with the assumed closure model; it does not assert that pseudo states are independently measured physical ground truth.
    \item \textbf{Comparable across models.} Models with explicit states and models with hidden states are evaluated in the same diagnostic space, with baseline state projections learned only on training data.
\end{enumerate}

Formally, let $F_{\Theta^\star}^{\mathrm{state}}$ be the frozen calibrated simulator and let $\widehat S_{0:T}$ be a model-implied state trajectory. 
The diagnostic evaluates
\begin{equation}
 d_{\mathrm{closure}}(\widehat S_{0:T},S_{\star,0:T})
\end{equation}
through trajectory errors and residuals. 
A lower value means that the inverse prediction is more compatible with the calibrated closure. 
It does not mean that $\widehat S_{0:T}$ equals the unobserved true chamber state. 
This is the correct evidential boundary for Experiment~2.

\end{document}